\documentclass{article}

\usepackage[nonatbib, preprint]{neurips_2026}

\usepackage[square,numbers]{natbib}
\usepackage[utf8]{inputenc}
\usepackage[T1]{fontenc}
\usepackage{hyperref}
\usepackage{url}
\usepackage{booktabs}
\usepackage{amsfonts}
\usepackage{nicefrac}
\usepackage{microtype}
\usepackage{xcolor}
\usepackage{float}
\usepackage{microtype}
\usepackage{hyperref}
\usepackage{url}
\usepackage{booktabs}
\usepackage{multirow}
\usepackage[table]{xcolor}
\usepackage{textcomp}
\usepackage{graphicx}
\usepackage{amssymb}

\usepackage{booktabs}
\usepackage{multirow}
\usepackage{makecell}
\usepackage[table]{xcolor}
\usepackage{threeparttable}

\usepackage{microtype}
\usepackage{hyperref}
\usepackage{url}
\usepackage{booktabs}

\usepackage{lineno}

\usepackage{graphicx}
\usepackage{url}
\usepackage{ulem}
\usepackage{enumitem}
\usepackage{multirow}
\usepackage{soul}
\usepackage{xcolor}
\usepackage{subcaption}
\usepackage{amsmath}
\usepackage{amssymb}
\usepackage{mathtools}
\usepackage{amsthm}
\usepackage{caption}
\usepackage{wrapfig}
\usepackage{booktabs}
\usepackage{siunitx}
\usepackage{booktabs}
\usepackage{longtable}
\usepackage{tabularx}
\usepackage{soul}
\usepackage{booktabs}
\usepackage{multirow}
\usepackage{siunitx}
\usepackage{setspace}
\usepackage{amsthm}
\usepackage{diagbox}

\newtheorem{proposition}{Proposition}

\definecolor{siggreen}{RGB}{0, 100, 0}     
\definecolor{nomgreen}{RGB}{0, 153, 0}  
\definecolor{nomred}{RGB}{230, 0, 0}

\newcommand{\sigup}{\rlap{\,\textcolor{siggreen}{$\uparrow\!\uparrow$}}}
\newcommand{\nomup}{\rlap{\,\textcolor{nomgreen}{$\uparrow$}}}
\newcommand{\nomdown}{\rlap{\,\textcolor{nomred}{$\downarrow$}}}

\usepackage{algorithm2e, amsmath}
\usepackage{amsthm}

\theoremstyle{definition}
\newtheorem{definition}{Definition}

\usepackage{lineno}

\definecolor{darkblue}{rgb}{0, 0, 0.5}
\hypersetup{colorlinks=true, citecolor=darkblue, linkcolor=darkblue, urlcolor=darkblue}

\title{DeepImagine: Clinical Trial Outcome Prediction via Stepwise Local Counterfactual Imaginations}

\author{Youze Zheng$^1$\thanks{equal contributions}, Jianyou Wang$^1$\footnotemark[1], Yuhan Chen$^1$\footnotemark[1], \textbf{Matthew Feng$^1$, Longtian Bao$^{1,2}$}\\ 
\textbf{Hanyuan Zhang$^1$, Maxim Khan$^3$, Aditya K. Sehgal$^3$, Christopher D. Rosin$^3$,}\\ \textbf{Umber Dube$^{1,4}$, Ramamohan Paturi$^1$} \\
$^1$Laboratory for Emerging Intelligence, University of California San Diego \\
$^2$University of Chicago, 
$^3$Elsevier \\
$^4$Department of Dermatology, University of California San Diego\\
\texttt{\{yoz018, jiw101\}@ucsd.edu}}

\begin{document}

\maketitle

\begin{abstract}
Predicting the outcomes of prospective clinical trials remains a major challenge. Clinical trial outcomes result from complex interactions among experimental factors such as drug interventions, participant demographics, and protocols. Here, we introduce DeepImagine, a framework that predicts target trial outcomes through stepwise counterfactual imagination anchored on historical trials with observed results. Starting from a relevant historical trial, DeepImagine sequentially modifies one differing experimental factor at a time. With each step a large language model (LLM) is posed a local counterfactual: how would the current imagined outcome change with this single perturbation? The updated result is carried forward as the input to the next step, until the historical configuration exactly matches the target, yielding the final prediction. Empirically, DeepImagine consistently outperforms direct one-step prediction across several off-the-shelf LLMs, with further gains when multiple imagination pathways, initiated from different historical anchors, are aggregated. We also construct natural counterfactuals augmented with synthetic reasoning traces and train a family of specialized language models, each dedicated to learning one factor's local counterfactual transition. Integrating these learned local operators into DeepImagine yields substantial improvements over general-purpose LLM baselines. Our findings position stepwise counterfactual imagination, distinct from both correlational prediction and explicit structural causal modeling, as a promising direction for clinical trial outcome prediction. All code, training scripts, and evaluation scripts are available at \href{https://github.com/deepimagine-counterfactual/DeepImagine}{https://github.com/deepimagine-counterfactual/DeepImagine}.
\end{abstract}

\begin{figure}
    \centering
    \includegraphics[width=\linewidth]{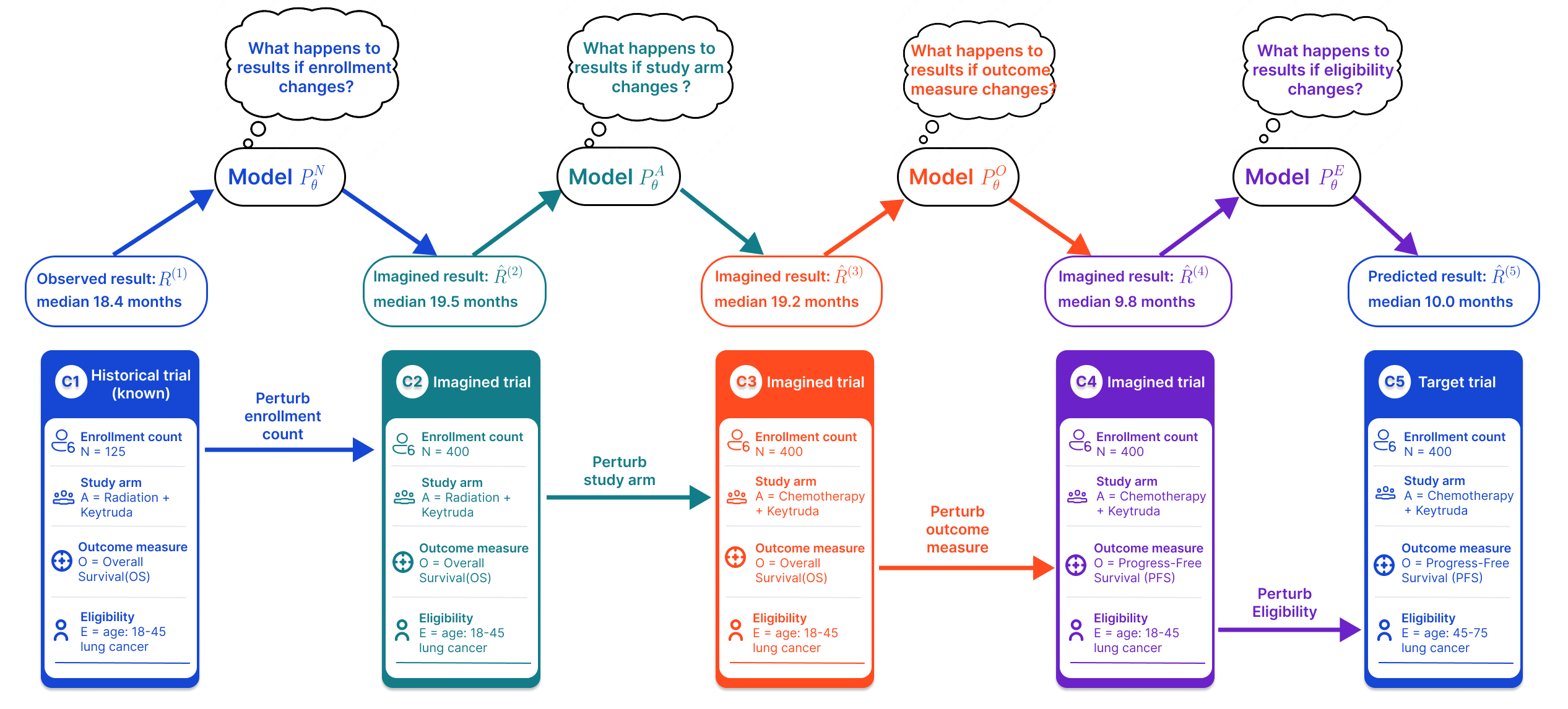}
    \caption{An example of stepwise local counterfactual imaginations to predict a target trial's outcomes.}
    \label{fig:counterfactual_flow}
\end{figure}

\section{Introduction}
Despite rapid advances in artificial intelligence (AI) and biomedical science, predicting the outcome of a prospective clinical trial remains a notoriously difficult problem. Our analysis of 14{,}451 recent Phase II and Phase III drug trials with posted results on \url{clinicaltrials.gov} reveals that 50.45\% failed to achieve their stated primary objectives, directly affecting 3{,}639{,}346 enrolled participants. Recent work \citep{wang2026ctopen} further shows that AI models perform poorly on this task. To fill this research gap, we introduce a novel framework for both inference and training of language models that substantially improves prediction accuracy.

We focus on endogenous factors that shape clinical trial results: the enrollment count per study arm, the drugs administered for each study arm, the outcome measure (the quantity to measure and metric for success), and participant eligibility criteria (e.g., age, comorbidities). Many of these factors correlate with success. Phase I trials succeed more often than Phase III trials. However, such correlations obscure the underlying causal mechanisms: Phase I trials succeed because their endpoints are typically easier (e.g., safety and tolerability), while Phase III trials must demonstrate statistically significant efficacy for market approval.

In practice, trial outcomes depend on intricate interactions among many factors that no closed-form model can fully capture. This likely explains the absence of successful applications of classical causal inference in this setting: the standard pipeline requires specifying a structural causal model or learning a causal DAG from data \citep{pearl2009causality, heckerman1999bayesian}, both of which presuppose a tractable causal structure that does not exist here. Direct language-model prediction fares no better, and recent retrieval-augmented (RAG) approaches \citep{wang2026ctopen} that supplement an LLM with similar historical trials yield only minimal and unstable improvements.

Inspired by \cite{wang2026ctopen}, we propose to learn how the observed result of a historical trial would change if its experimental factors were modified to match the target trial's configuration. This bypasses the need for a structural causal model: the model only needs to learn the hidden mechanism governing this counterfactual transition. However, supervised fine-tuning on this task is empirically ineffective because any historical trial  differs from the target along too many factors at once. We therefore propose \textbf{DeepImagine}, which decomposes the prediction into a Markov chain of single-factor perturbations: starting from a historical trial with its observed result, DeepImagine modifies one experimental factor at a time to match the target, and at each step the language model predicts how the current intermediate result would change if only that single factor were perturbed, \textit{ceteris paribus}, i.e., holding all others fixed (Figure~\ref{fig:counterfactual_flow}). We refer to each step as a {local counterfactual imagination}, using ``counterfactual'' in a procedural sense rather than in the formal sense of \citet{pearl2009causality}.

Our experiments yield three main findings. First, DeepImagine generalizes effectively to off-the-shelf LLMs such as GPT-5, producing consistent improvements over one-shot RAG-style prediction, indicating that even general-purpose LLMs predict more accurately when prompted to reason via local counterfactual imagination. Second, aggregating predictions across multiple pathways anchored on {different} historical trials yields significant additional gains; this is not an artifact of majority voting, as multiple pathways anchored on the {same} trial do not produce the same improvement. Third, language models trained within the DeepImagine framework substantially outperform baselines fine-tuned to predict outcomes directly, suggesting that learning a single-factor transition is far easier than learning the joint transition over all factors at once.

\section{Related Work}
We use the Summer 2025 benchmark from CT Open \citep{wang2026ctopen}, which guarantees that no included trial has any results released before 2025-08-31 and is therefore free of contamination from LLM pretraining data. Earlier benchmarks \citep{fu2022hint, wang2023spot, wang2023pytrial, wang2022trial2vec, gao2024cto} predate the popularization of LLMs, lack expert-validated evaluation signals, and include trials whose outcomes likely appear in standard pretraining corpora.

As noted in the introduction, our work departs from classical causal inference \citep{pearl2009causality, heckerman1999bayesian} by not assuming a fully specified structural causal model. A separate line of work examines whether LLMs reason causally: \citet{kiciman2023causal} find that strong LLMs produce plausible but brittle causal arguments, and \citet{jin2023cladder} show that formally grounded causal reasoning remains difficult. Related benchmarks evaluate explicit causal and temporal reasoning \citep{miliani2025explica}, formal counterfactual inference \citep{chen2025counterbench}, and abduction-based counterfactual evaluation \citep{vashishtha2025improving}. For training, process supervision improves multi-step reasoning over pure outcome supervision \citep{lightman2023letsverify}.

\section{Preliminary}
\label{sec:preliminary}

\subsection{Endogenous Factors of a Clinical Trial}
\label{sec:endogenous}

A clinical trial $C_i$ in our setting is characterized by four endogenous experimental factors: (i) the study arms, (ii) the enrollment count per arm, (iii) the outcome measures, and (iv) the participant eligibility criteria. Except for enrollment count, the other three factors are complex textual variables, making it infeasible to specify a structured model of their interactions.  Our DeepImagine framework performs stepwise local counterfactual imaginations over these four endogenous factors; other factors (e.g., phase, sponsor, geography) are treated as auxiliary context. We also define the notion of a trial result, which is the quantity DeepImagine aims to predict.

\begin{definition}[\textbf{Study Arm}]
\label{def:arm}
A study arm is a group of participants assigned to a specific intervention regimen within a clinical trial. An arm can contain details of the administered drug(s) with their dosage and frequency, and their biological mechanisms. A trial typically includes some treatment arms receiving the interventions(s) (e.g. drugs) under investigation and some comparator arms receiving a placebo, the standard of care, or an alternative intervention(s). We denote the $k$-th study arm of trial $C_i$ by $A_k^{(i)}$.\\
\textit{Example:} A trial may include one arm receiving 200~mg of pembrolizumab every three weeks, and a comparator arm receiving standard chemotherapy. Pembrolizumab's biological mechanism of action is PD-1 blockade which restores T-cell anti-tumor activity via immune cell checkpoint inhibition. 
\end{definition}

\begin{definition}[\textbf{Enrollment Count}]
\label{def:enrollment}
The enrollment count of a study arm is the number of participants allocated to that arm. We denote the enrollment count of arm $A_k^{(i)}$ by $N_k^{(i)}$.
\end{definition}

\begin{definition}[\textbf{Outcome Measure}]
\label{def:outcome-measure}
An outcome measure (also called an endpoint) is a pre-specified quantity used to evaluate the effect of a drug intervention. It defines both what will be measured (e.g., 5-year overall survival rate, progression-free survival) and the criterion under which success is declared (e.g., a statistically significant improvement over the comparator at $p<0.05$). We denote the $j$-th outcome measure of trial $C_i$ by $O_j^{(i)}$.\\
\textit{Example:} ``5-year overall survival rate, with success defined as statistically significant improvement over the comparator at $p<0.05$'' is an outcome measure.
\end{definition}

\begin{definition}[\textbf{Eligibility Criteria}]
\label{def:eligibility}
The particpant eligibility criteria $E^{(i)}$ of trial $C_i$ define the inclusion and exclusion conditions a participant must satisfy to enroll. For example, presence or absence of the targeted disease. They specify the demographic and clinical characteristics of the trial population, including age, sex, comorbidities, prior treatments, disease status, and biomarker status.\\
\textit{Example:} A trial may enroll only participants aged 18 to 75 with stage~III non-small-cell lung cancer who have not received prior immunotherapy.
\end{definition}

\begin{definition}[\textbf{Trial Result}]
\label{def:result}
A trial result is the observed value of an outcome measure for a particular study arm after the trial has been conducted. Given trial $C_i$, patient eligibility $E^{(i)}$, outcome measure $O_j^{(i)}$, and study arm $A_k^{(i)}$ with enrollment count $N_k^{(i)}$, we denote the corresponding result by 
\[R^{(i)}_{j,k} = \textrm{Result of}\!\left(N_k^{(i)}, A_k^{(i)}, O_j^{(i)}, E^{(i)}, C_i\right)\]
This modeling shows our assumption that the four main factors influence clinical trial results more directly while other factors are auxiliary context. Trial results are mostly numerical.\\ 
\textit{Example:} ``85\% of patients in arm $A_k^{(i)}$ survived for at least 5 years'' is a trial result.
\end{definition}

We sometimes refer a set of these four factors $\left(N_k^{(i)}, A_k^{(i)}, O_j^{(i)}, E^{(i)}\right)$ as a \textbf{configuration}, denoted as $X^{(i,j,k)}$. We use $X^{(t)}$ to denote a configuration when the context is clear.

\section{Methodology}
Since the mechanisms governing trial outcomes are too complex to specify in closed form, DeepImagine does not presuppose a fully specified structured model. Instead, it adopts a relaxed, model-based approximation that we call local counterfactual imagination. In this section, we first formalize Stepwise Local Counterfactual Imagination, which constitutes a single DeepImagine pathway, and then present two propositions formalizing key empirical findings: Proposition~\ref{thm:multi-anchor} explains why DeepImagine predicts a target trial's results by anchoring on prior clinical trials with observed outcomes, and why aggregating across more anchor trials improves performance. Proposition~\ref{thm:single-pathway-approx} explains why stepwise local counterfactual imagination is an effective approximation to directly predicting the target trial's results from prior observed outcomes, which is a task we find empirically too difficult for language models. Note, while we present two formalized propositions, they are created to highlight our empirical findings rather than standing out on their own with theoretical strength.

\begin{definition}[\textbf{Local Counterfactual Imagination}]
\label{def:local-cf}
Let 
\[
\Phi_{t-1} \in \{\text{enrollment count } N,\ 
\text{study arm } A,\ 
\text{outcome measure } O,\ \text{eligibility criteria } E\},
\]
be the only endogenous factor that differs for two clinical trial configurations, $X^{(t-1)}$ and $X^{(t)}$. $X^{(t-1)}$ has $\Phi_{t-1} = \phi_{t-1}$ and $X^{(t)}$ has $\Phi_{t-1} = \phi_{t}$, and let $R^{(t-1)}$ be the (real or previously imagined) result associated with $X^{(t-1)}$. A local counterfactual imagination for the factor $\Phi_{t-1}$ is a mapping
\[
f_{\Phi_{t-1}}:\ \bigl(R^{(t-1)},\, X^{(t-1)},\, X^{(t)}\bigr)\ \mapsto\ {R}^{(t)},
\]
which produces an imagined result $R^{(t)}$ by setting the factor $\Phi_{t-1}$ to be $\phi_t$ instead of $\phi_{t-1}$.
\end{definition}

\begin{definition}[\textbf{Probabilistic Parametrization of Local Counterfactual Imagination}]
\label{def:prob-parametrization}
For each endogenous factor, we use the argmax of a probabilistic language model $P^{\Phi_{t-1}}_{\theta}$ to calculate the local counterfactual imagination operator $f_{\Phi_{t-1}}$:
\[
f_{\Phi_{t-1}}\bigl(R^{(t-1)},\, X^{(t-1)},\, X^{(t)}\bigr) \;=\; \operatorname*{argmax}_{R^{(t)}}\; P_\theta^{\Phi_{t-1}}\!\left(R^{(t)} \,\middle|\, R^{(t-1)},\, X^{(t-1)},\, X^{(t)}\right),
\]
where $P_\theta^{\Phi_{t-1}}$ is a language model specialized for factor $\Phi_{t-1}$, with learnable parameters $\theta$. In practice, we trained four separate specialized models, one for each factor in $\{N, A, O, E\}$.
\end{definition}

\begin{definition}[\textbf{Stepwise Local Counterfactual Imaginations}]
\label{def:stepwise-cf}
A pathway of stepwise local counterfactual imaginations from a source configuration $X^{(1)}$ with observed result $R^{(1)}$ to a target configuration $X^{(T)}$ is a chain
\[
\bigl(X^{(1)},\, R^{(1)}\bigr) \xrightarrow{\;\; f_{\Phi_1} \;\;} \bigl(X^{(2)},\, R^{(2)}\bigr) \xrightarrow{\;\; f_{\Phi_2} \;\;} \cdots \xrightarrow{\;\; f_{\Phi_{T-1}} \;\;} \bigl(X^{(T)},\, R^{(T)}\bigr).
\]
where each consecutive pair $(X^{(t-1)}, X^{(t)})$ differs in exactly one endogenous factor $\Phi_{t-1}$, and each $R^{(t)} = f_{\Phi_{t-1}}\bigl(R^{(t-1)}, X^{(t-1)}, X^{(t)}\bigr)$ is produced by the corresponding local counterfactual imagination. The final result $R^{(T)}$ is the prediction for the target trial $x^{(T)}$. In this paper, we use $T=5$ and $\{\Phi_1, \Phi_2, \Phi_3, \Phi_4\}$ are $\{N, A, O, E\}$ in this order. This ordering is determined with the help of our biomedical expert but it may not be the optimal ordering for all cases.
\end{definition}

\subsection{Approximating Marginal using a Single Greedy Counterfactual Pathway}
\label{ssec:single-pathway-approx}

Our key empirical observation underlying DeepImagine is that directly predicting the target result $R^{(T)}$ from $(R^{(1)}, X^{(1)}, X^{(T)})$ in one step is hard, whereas single-factor counterfactual transitions are substantially easier to learn. However, by law of total probability, the marginal for $R^{(T)}$ via stepwise local counterfactual imaginations is intractable as it requires integrating over all possible values for intermediate results $R^{(2)}$ to $R^{(T-1)}$.

\begin{equation}
\label{eq:marginal_equation}
\begin{aligned}
P\!\bigl(R^{(T)} \,\big|\, R^{(1)},\, X^{(1)},\, \ldots,\, X^{(T)}\bigr) ={} & \int_{R^{(2)}} \!\!\cdots\! \int_{R^{(T-1)}} \prod_{t=2}^{T-1} P_\theta^{\Phi_{t-1}}\!\bigl(R^{(t)} \,\big|\, R^{(t-1)},\, X^{(t-1)},\, X^{(t)}\bigr) \\
& \times\; P_\theta^{\Phi_{T-1}}\!\bigl(R^{(T)} \,\big|\, R^{(T-1)},\, X^{(T-1)},\, X^{(T)}\bigr) \, dR^{(2)} \cdots dR^{(T-1)}
\end{aligned}
\end{equation}
Our training procedure that uses natural counterfactuals from controlled experiments and synthetic counterfactuals from similar trials (Section~\ref{sec:training}) encourages each local operator $P_\theta^{\Phi_{t-1}}$ to place most of its probability mass on a single value at each step. We formalize this property below as the dominant local conditional assumption, under which the intractable model marginal can be tightly approximated by the probability product along a single greedy pathway which is precisely the quantity DeepImagine computes at inference.

Concretely, the probability of the greedy pathway starting from observed result $R^{(1)}$ is

\begin{equation}
\label{eq:greedy}
\begin{aligned}
&\prod_{t=2}^{T} P_\theta^{\Phi_{t-1}}\!\left(
\hat{R}^{(t)} \mid \hat{R}^{(t-1)}, X^{(t-1)}, X^{(t)}
\right) \\
&= P_\theta^{\Phi_1}\!\left(
\hat{R}^{(2)} \mid R^{(1)}, X^{(1)}, X^{(2)}
\right) \\
&\times
\prod_{t=3}^{T} P_\theta^{\Phi_{t-1}}\!\left(
\hat{R}^{(t)} \mid
f_{\Phi_{t-1}}\!\left(\hat{R}^{(t-2)}, X^{(t-2)}, X^{(t-1)}\right),
X^{(t-1)}, X^{(t)}
\right).
\end{aligned}
\end{equation}
where $\hat{R}^{(t-1)}$ takes the greedy value using the operator $f_{\Phi_{t-1}}(\cdot)$ and $P^{\Phi_{t-1}}_{\theta}$ is a language model that captures a local counterfactual imagination. 

\begin{definition}[\textbf{Dominant Local Conditional}]
\label{def:dominant}
A local operator $P_\theta^{\Phi_{t-1}}$ is {$\delta_t$-dominant} along the greedy trajectory if its mode $\hat{R}^{(t)} :=  f_{\Phi_{t-1}}(\hat{R}^{(t-1)}, X^{(t-1)}, X^{(t)})$ carries near-unit mass:
\[
P_\theta^{\Phi_{t-1}}\!\bigl(\hat{R}^{(t)} \,\big|\, \hat{R}^{(t-1)},\, X^{(t-1)},\, X^{(t)}\bigr) \;\geq\; 1 - \delta_t,
\]
for some $0 < \delta_t \ll 1$.
\end{definition}

\begin{proposition}[\textbf{Greedy Approximation of the Marginal}]
\label{thm:single-pathway-approx}
Suppose each $P_\theta^{\Phi_{t-1}}$ is $\delta_t$-dominant for $t = 2, \ldots, T$. Then the probability of the greedy pathway computed by DeepImagine (see Equation~\ref{eq:greedy}) approximates the model marginal evaluated at the greedy decoded result $\hat{R}^{(T)}$:
\[
\Bigl| \,\prod_{t=2}^{T} P_\theta^{\Phi_{t-1}}\!\bigl(\hat{R}^{(t)} \,\big|\, \hat{R}^{(t-1)},\, X^{(t-1)},\, X^{(t)}\bigr) \;-\; P\!\bigl(\hat{R}^{(T)} \,\big|\, R^{(1)},\, X^{(1)},\, \ldots,\, X^{(T)}\bigr) \,\Bigr| \;\leq\; \sum_{t=2}^{T} \delta_t.
\]
In our setting $T = 5$, the bound reduces to $\delta_2 + \delta_3 + \delta_4 + \delta_5$.
\end{proposition}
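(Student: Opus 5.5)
We will work with the marginal in Equation~\ref{eq:marginal_equation}, written as a sum (or integral) over intermediate trajectories $(r_2,\dots,r_{T-1})$ with the endpoint fixed at $r_T=\hat R^{(T)}$ and $r_1=R^{(1)}$. Abbreviate the local factors as $p_t(r_t\mid r_{t-1}) := P_\theta^{\Phi_{t-1}}(r_t\mid r_{t-1},X^{(t-1)},X^{(t)})$. The marginal then splits into two parts. The first is the single term with $r_t=\hat R^{(t)}$ for every $t=2,\dots,T-1$, and this term is exactly the greedy product $G:=\prod_{t=2}^T p_t(\hat R^{(t)}\mid \hat R^{(t-1)})$ of Equation~\ref{eq:greedy}. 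The second is the sum over all other trajectories, each of which deviates from the greedy path at some step. Since $P(\hat R^{(T)}\mid\cdot)=G+\text{(rest)}$ and the rest is nonnegative, we only need to bound the rest from above by $\sum_t\delta_t$.

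To control the rest, we decompose according to the \emph{first} deviation index. For $s\in\{2,\dots,T-1\}$, let $D_s$ be the set of trajectories with $r_t=\hat R^{(t)}$ for $t<s$ and $r_s\neq\hat R^{(s)}$. The contribution of $D_s$ is
\[
\Bigl(\prod_{t=2}^{s-1} p_t(\hat R^{(t)}\mid\hat R^{(t-1)})\Bigr)\sum_{r_s\neq\hat R^{(s)}} p_s(r_s\mid\hat R^{(s-1)})\cdot Q_s(r_s),
\]
where $Q_s(r_s)\le 1$ is the conditional probability of reaching $\hat R^{(T)}$ from $r_s$. We have $Q_s\le 1$ because a product of Markov kernels, marginalized over the intermediate states, is itself a probability. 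The leading product is at most $1$. By $\delta_s$-dominance at the greedy predecessor $\hat R^{(s-1)}$, the middle sum is at most $1-p_s(\hat R^{(s)}\mid\hat R^{(s-1)})\le\delta_s$. Summing over $s$ gives $0\le P-G\le\sum_{s=2}^{T-1}\delta_s\le\sum_{t=2}^T\delta_t$, which proves the claim, with some slack because the last step $t=T$ never needs its $\delta_T$. For $T=5$ the bound specializes to the stated one.

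The main obstacle, and the point to check carefully, is that Definition~\ref{def:dominant} assumes dominance only \emph{along the greedy trajectory}, that is, only at the conditioning value $\hat R^{(t-1)}$. The first-deviation decomposition is chosen precisely so that dominance is invoked only at greedy predecessors. After the first deviation we use only the trivial bound $Q_s\le 1$, never dominance at off-path states. We will also state explicitly that the "integral" is read as a sum over a discrete (tokenized) result space, or alternatively that the greedy path is treated as an atom of the measure. Otherwise the single greedy trajectory has measure zero and the comparison with $G$ is meaningless. An alternative route via a telescoping bound $|\prod a_t-\prod b_t|\le\sum|a_t-b_t|$ is possible, but the first-deviation argument is cleaner and gives a one-sided bound directly.
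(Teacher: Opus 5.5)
Your argument is correct, but it bounds the gap by a different mechanism than the paper. Both proofs get $P \ge G$ the same way: the greedy trajectory is one nonnegative atom of the marginal. Your remark about reading the integral as a sum, or treating the greedy path as an atom, matches the paper's ``atom mass'' convention.

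\textbf{The paper's upper bound.} It uses a sandwich. It lower-bounds the greedy product by $\prod_{t=2}^{T}(1-\delta_t) \ge 1-\sum_{t=2}^{T}\delta_t$, the Weierstrass product inequality proved by induction. It combines this with the trivial bound $P \le 1$ and concludes $0 \le P-G \le 1-G \le \sum_{t=2}^{T}\delta_t$.

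\textbf{Your upper bound.} You bound the off-greedy mass $P-G$ directly, splitting it by the first deviation index and invoking dominance only at that index. This needs no product inequality, and it shows why dominance is only required along the greedy trajectory. It is also sharper: you get $\sum_{t=2}^{T-1}\delta_t$ and never use the last step's dominance. This is consistent with $P=G$ exactly when $T=2$, where the paper's bound is $\delta_2$. If you keep the prefix factors instead of bounding them by $1$, your sum telescopes to $P-G \le 1-\prod_{t=2}^{T-1} p_t(\hat{R}^{(t)}\mid\hat{R}^{(t-1)})$.

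\textbf{What the paper's route buys.} It uses dominance at every step, including $t=T$. It therefore shows that both $G$ and $P$ lie in $[1-\sum_t\delta_t,\,1]$, i.e.\ the model marginal itself is nearly a point mass at $\hat{R}^{(T)}$. That is what yields its total-variation corollary $\|P_\theta(\cdot\mid\cdots)-\boldsymbol{\delta}_{\hat{R}^{(T)}}\|_{\mathrm{TV}} \le \sum_t\delta_t$, which your decomposition alone does not provide.
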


The proof of Proposition~\ref{thm:single-pathway-approx} is in Appendix~\ref{app:single-pathway-approx}; it follows from the chain rule for $P_\theta$ and the elementary inequality $\prod_t(1-\delta_t) \geq 1 - \sum_t \delta_t$. Empirically, this approximation works well and since we keep $T$ relatively small, the chance of error propagation is reduced and the error bound is also tightened.

\subsection{Convergence of Aggregating Multiple Counterfactual Pathways}
\label{ssec:multi-anchor}

Another key empirical finding is that aggregating multiple counterfactual pathways anchored on different historical clinical trials would significantly outperform any single pathway prediction anchored on a single historical clinical trial. We formalize this with a convergence proposition: under a calibration assumption, the average predictive distributions from independent anchors converges almost surely to the true target distribution as the number of anchors grows.

Let $\bigl\{\bigl(X_i^{(1)}, R_i^{(1)}\bigr)\bigr\}_{i \geq 1}$ be an i.i.d.\ sequence of historical trial anchors drawn from the unknown true data-generating law $p_\star$, where $\bigl(X_i^{(1)}, R_i^{(1)}\bigr)$ is the source configuration and its observed result for the $i$-th clinical trial anchor. For any target configuration $X^{(T)}$, running DeepImagine from $\bigl(X_i^{(1)}, R_i^{(1)}\bigr)$ to $X^{(T)}$ yields the marginal distribution
$P\!\left(R^{(T)} \,\middle|\, R_i^{(1)}, X_i^{(1)}, X^{(T)}\right)$
over the target result $R^{(T)}$, see Equation~\ref{eq:marginal_equation}. Let $P_{p_\star}\!\left(R^{(T)} \,\middle|\, X^{(T)}\right)$ denote the true conditional distribution of $R^{(T)}$ given $X^{(T)}$ under the law $p_\star$. 

\begin{proposition}[\textbf{Convergence of Aggregating Counterfactual Pathways}]
\label{thm:multi-anchor}
Under a calibration assumption, for every fixed target configuration $X^{(T)}$,
\[
\frac{1}{n} \sum_{i=1}^{n} P\!\left(R^{(T)} \,\middle|\, R_i^{(1)}, X_i^{(1)}, X^{(T)}\right) \;\longrightarrow\; P_{p_\star}\!\left(R^{(T)} \,\middle|\, X^{(T)}\right) \qquad \text{almost surely as } n \to \infty.
\]
\end{proposition}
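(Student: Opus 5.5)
Since the calibration assumption is not spelled out in the statement, the first step is to fix it precisely, in the form the strong law of large numbers needs. For a fixed target $X^{(T)}$, define the random variable
\[
Z_i(r) \;:=\; P\!\left(R^{(T)}=r \,\middle|\, R_i^{(1)}, X_i^{(1)}, X^{(T)}\right),
\]
which is a measurable function of the anchor $(X_i^{(1)}, R_i^{(1)})$, with marginal given by Equation~\ref{eq:marginal_equation}. I will take calibration to mean that the pathway marginal is unbiased on average over anchors drawn from $p_\star$:
\[
\mathbb{E}_{(X^{(1)},R^{(1)})\sim p_\star}\!\left[ P\!\left(R^{(T)}=r \,\middle|\, R^{(1)}, X^{(1)}, X^{(T)}\right)\right] \;=\; P_{p_\star}\!\left(R^{(T)}=r \,\middle|\, X^{(T)}\right)
\]
for every $r$, or for every measurable set of results in the continuous case. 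Equivalently, the stepwise operators $P_\theta^{\Phi}$ together act as a transport kernel that carries the law of anchors to the true conditional law at the target.

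The second step is pointwise convergence. For each fixed $r$, the variables $Z_i(r)$ are i.i.d., because the anchors are i.i.d.\ and $Z_i$ is a fixed measurable function of them. They are also bounded in $[0,1]$, and therefore integrable. Kolmogorov's strong law of large numbers then gives $\frac1n\sum_{i=1}^n Z_i(r) \to \mathbb{E}[Z_1(r)]$ almost surely. By the calibration assumption this limit equals $P_{p_\star}(R^{(T)}=r\mid X^{(T)})$. This already proves the proposition if "convergence" is read pointwise, for each fixed $r$ or each fixed event.

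The third step strengthens the result to convergence of distributions. The obstacle is that every $r$ has its own null set, and with uncountably many $r$ these cannot simply be unioned. I would handle this in one of two ways.

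\begin{itemize}
\item If the result space is countable, as it effectively is for discretized numerical or textual outputs, I take a countable union of null sets. This gives almost-sure convergence simultaneously for all $r$. Because the averaged quantities are probability mass functions converging pointwise to another probability mass function, Scheff\'e's lemma upgrades this to almost-sure convergence in total variation.
\item If results are real-valued, I apply the SLLN to the averaged CDFs on rational thresholds, using a countable union again. Monotonicity and a Glivenko--Cantelli-style sandwich argument then yield uniform convergence of the CDFs almost surely, and hence weak convergence.
\end{itemize}

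The main difficulty is therefore not the limit argument, which is the standard SLLN. It lies in stating the calibration assumption in a form strong enough to identify the limit, while not assuming the conclusion outright. I would say explicitly that this assumption carries the substantive content, and that independence across different anchors is exactly what makes the SLLN applicable. Repeated pathways from the same anchor would instead be dependent through the shared $(X^{(1)},R^{(1)})$, and their average would not converge to $P_{p_\star}$. This matches the paper's empirical observation that aggregating over the same anchor does not help.
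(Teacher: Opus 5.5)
Your proof is correct, and your steps 1--2 are the paper's proof. The paper defines calibration as equality of the anchor-averaged predictive $M_\theta(B\mid X^{(T)})=\mathbb{E}_{p_\star}[P_\theta(B\mid R^{(1)},X^{(1)},X^{(T)})]$ with $P_{p_\star}(B\mid X^{(T)})$ for every measurable $B$. It then applies Kolmogorov's strong law to the bounded i.i.d.\ variables $Y_i=P_\theta(B\mid R_i^{(1)},X_i^{(1)},X^{(T)})$ and stops at per-event almost-sure convergence.

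Your step 3 goes beyond what the paper proves here. The paper strengthens the result only in its relaxed variant: it assumes densities with respect to a common reference measure, uses Fubini to obtain a single null set, and then applies Scheff\'e. Your countable-space bullet is the counting-measure special case of that argument. In your real-valued bullet, rational thresholds alone give weak convergence but not uniform convergence of the CDFs. For uniform convergence you must also add the countably many atoms of the limit CDF, and the left limits there, to your countable set, as the genuine Glivenko--Cantelli argument does.
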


Each single-pathway prediction $P(R^{(T)} \mid R_i^{(1)}, X_i^{(1)}, X^{(T)})$ is a noisy estimate of the true target distribution $P_{p_\star}(R^{(T)} \mid X^{(T)})$ that is unbiased on average under calibration; aggregating across independent anchors reduces variance via the strong law of large numbers. The i.i.d.\ assumption is reasonable in our setting because distinct historical trials are largely independent draws from the underlying distribution, and we confirm this empirically: drawing multiple pathways from the same anchor clinical trial performs significantly worse than drawing the same number of pathways from distinct, independent anchors, which is exactly the variance-reduction signature predicted by Proposition~\ref{thm:multi-anchor}.

The calibration assumption is also plausible by construction. Our training procedure for the local operators $P_\theta^\Phi$ (Section~\ref{sec:training}) explicitly targets it: the natural counterfactuals used during training are observed conditional realizations of the true law $p_\star$, and the training objective minimizes a divergence between $P_\theta^\Phi$ and the empirical conditional. Calibration is thus not an external condition imposed on the analysis but the operational target of training itself.

Proposition~\ref{thm:multi-anchor} and its proof are formally stated in Appendix~\ref{app:multi-anchor}, where we also state a relaxed variant: if the model is calibrated only up to a noise term $\epsilon$ (in total variation), the aggregated prediction converges almost surely to within $\epsilon$ of the true target conditional rather than exactly to it. This relaxation is important in practice because perfect calibration is rarely achievable, but the variance-reduction benefit of multi-anchor aggregation persists regardless.

\subsection{Training with Natural and Synthetic Counterfactuals}
\label{sec:training}

DeepImagine relies on four specialized local operators $P_\theta^{\Phi}$, one per endogenous factor $\Phi \in \{N, A, O, E\}$. Training each operator requires supervised tuples of the form $\bigl(R^{(t-1)},\, X^{(t-1)},\, R^{(t)}, \, X^{(t)}\bigr)$ in which $X^{(t-1)}$ and $X^{(t)}$ only differ in the factor $\Phi$. We collect such tuples from two complementary sources: \textbf{natural counterfactuals} where $R^{(t-1)}, R^{(t)}$ are both observed in real-life experiments, and \textbf{synthetic counterfactuals} where $R^{(t-1)}, R^{(t)}$ are generated by GPT-5.

\textbf{Natural Counterfactuals from a Controlled Clinical Trial:} A clinical trial $C_i$ typically reports results across multiple outcome measures and multiple study arms. They contain at least two arms, the comparator arm(s) which are often placebo or standard of care, and the treatment arm(s) which are different dosages of the drug(s) under investigation. Such side-by-side comparisons are ubiquitous in controlled biomedical experiments, yielding two natural sources of single-factor counterfactuals.

For a trial $C$ with eligibility $E$, study arm $A$ with enrollment $N$, and two outcome measures $O_{j_1}$ and $O_{j_2}$, both results $R_{j_1}$ and $R_{j_2}$ are observed, and the configurations $X^{j_1}$ and $X^{j_2}$ differ only in the outcome measure factor. Each pair forms a natural counterfactual used to train $P_\theta^O$ to predict
\[
P_\theta^O\!\left(\textrm{concat }(\textrm{reasoning, }R^{(j_2)} )\,\middle|\, R^{(j_1)},\, X^{j_1},\, X^{j_2}\right),
\]
that is, how the result under one outcome measure transitions to the result under another for the same study arm, \textit{ceteris paribus}. The reasoning is generated by prompting GPT-5 to explain why the observed result transitions from $R_{j_1}$ to $R_{j_2}$ as the outcome measure changes from $O_{j_1}$ to $O_{j_2}$. We manually inspected a random sample of these reasoning traces with a biomedical expert and found them to be plausible.

Analogously, for a trial $C$ with eligibility $E$, outcome measure $O$, and two study arms $A_{k_1}$ and $A_{k_2}$, both results $R_{k_1}$ and $R_{k_2}$ are observed; the configurations $X^{k_1}$ and $X^{k_2}$ share all remaining endogenous factors and yield a natural counterfactual used to train $P_\theta^A$. Both $P_\theta^O$ and $P_\theta^A$ are trained via standard cross-entropy supervised fine-tuning. Crucially, because natural counterfactuals are sampled directly from the true data-generating law $p_\star$, this procedure yields operators that approximately satisfy the calibration condition underlying Proposition~\ref{thm:multi-anchor}.

\textbf{Synthetic Counterfactuals from Similar Trial Pairs.} Natural local counterfactuals are unavailable for the enrollment factor $N$ and the eligibility factor $E$, because constructing one would require running an alternate version of the same trial \textit{ceteris paribus}, varying only the factor of interest. To address this gap, we develop a synthetic counterfactual pipeline. We first embed each historical trial with NV-Embed-v2 \citep{lee2025nvembed}, retrieve candidate pairs based on cosine similarity, and use GPT-5 to verify the semantic similarity of each retrieved pair. In practice, however, even the most similar trials still differ along all four endogenous factors $\{N, A, O, E\}$, so they cannot be used directly as single-factor local counterfactuals.

Concretely, suppose two retrieved trials $C_1$ and $C_2$ have configurations $(N^{(1)}, A^{(1)}, O^{(1)}, E^{(1)})$ and $(N^{(2)}, A^{(2)}, O^{(2)}, E^{(2)})$ with observed results $R^{(1)}$ and $R^{(2)}$, respectively. We prompt GPT-5 to imagine the intermediate results bridging $R^{(1)}$ and $R^{(2)}$ along a stepwise local counterfactual pathway:
\[
\begin{aligned}
&(N^{(1)}, A^{(1)}, O^{(1)}, E^{(1)}, R^{(1)}) \to (N^{(2)}, A^{(1)}, O^{(1)}, E^{(1)}, R^{(a)}) \\
&\to (N^{(2)}, A^{(2)}, O^{(1)}, E^{(1)}, R^{(b)}) \to (N^{(2)}, A^{(2)}, O^{(2)}, E^{(1)}, R^{(c)}) \to (N^{(2)}, A^{(2)}, O^{(2)}, E^{(2)}, R^{(2)}),
\end{aligned}
\]
in which each step perturbs exactly one endogenous factor, following the order $\{N, A, O, E\}$ used by DeepImagine. We supply GPT-5 with both endpoint results $R^{(1)}$ and $R^{(2)}$ so that the imagined intermediates $R^{(a)}, R^{(b)}, R^{(c)}$ are constrained to form a plausible trajectory consistent with the observed transition. Each step of the resulting pathway, together with its accompanying reasoning, yields a synthetic training tuple for the corresponding local operator. We use these tuples primarily to train $P_\theta^N$ and $P_\theta^E$, for which natural counterfactuals are unavailable, and additionally to augment the natural counterfactual training sets for $P_\theta^A$ and $P_\theta^O$.

\section{Results and Analyses}
\label{sec:experiments}
\subsection{Experimental Setup}
\label{ssec:exp-setup}

\paragraph{Training data.}
Our data source is $14{,}000$ randomized controlled drug trials with posted results on \url{clinicaltrials.gov}, restricted to trials with enrollment count $N \geq 50$ to filter out underpowered studies. Following Section~\ref{sec:training}, we extracted natural counterfactuals: $57$k for training $P_\theta^A$ and $82$k for training $P_\theta^O$. We also extracted synthetic counterfactuals: $45$k per factor $\Phi \in \{N, A, O, E\}$. For $P_\theta^A$ and $P_\theta^O$, these augment the natural sets; for $P_\theta^N$ and $P_\theta^E$, no natural counterfactual exists, so only synthetic data is used. Auxiliary reasoning trace is generated by GPT-5. Note that we hold out $10\%$ of each operator's training set for hyperparameter tuning, full settings are reported in Appendix~\ref{app:training-details}.

\paragraph{Evaluation benchmarks.}
We evaluate the time-stamped CT Open benchmarks~\citep{wang2026ctopen}: \textit{Summer 2025} (cutoff 2025-08-31) with 701 questions including 442 positives and 259 negatives. Each question is a binary decision over a tuple $(C, O, A_{k_1}, A_{k_2})$ asking whether $R_{k_1}$ shows a statistically significant improvement over $R_{k_2}$. DeepImagine produces independent estimates $\hat{R}_{k_1}$ and $\hat{R}_{k_2}$ via stepwise local counterfactual imagination, and the same LLM that generated these predictions converts them into a binary answer.

\paragraph{Models.}
We evaluate three off-the-shelf LLMs as DeepImagine backbones: GPT-5-2025-08-07, GPT-5-mini-2025-08-07, o3-mini-2025-01-31. All four have stated knowledge cutoffs preceding 2025-08-31 (qualifying for Summer 2025). We instantiate each $P_\theta^\Phi$ for $\Phi \in \{N, A, O, E\}$ as a separate  Gemma-3-12B-it model \cite{gemma3report}  fine-tuned on the corresponding train set. Gemma's knowledge cutoff qualifies for both benchmarks. We included two traditional ML models, Feed-Forward NN \citep{tranchevent2019deep} and HINT \citep{fu2022hint} evaluated by CT Open as reference.

For off-the-shelf LLMs, we compare four strategies:\\
\textbf{Direct:} The LLM sees only the target tuple $(C, O, A_{k_1}, A_{k_2})$ before giving a binary answer.\\
\textbf{RAG.} For each arm $A_{k}$, we retrieve up to $7$ historical trials (with observed results) that are most relevant to that arm. The LLM sees these retrieved results and the target configuration before giving answer. This is the standard RAG baseline of \citet{wang2026ctopen}.\\
\textbf{DeepImagine ($n$-anchor).} $1\leq n \leq 7$. For each arm, we anchor the LLM on $n$ distinct historical trials $\bigl(X_i^{(1)}, R_i^{(1)}\bigr)$ that are the same as the RAG strategy. Each anchor trial computes a single greedy pathway. The $n$ predictions are averaged, following Proposition~\ref{thm:multi-anchor}. \\
\textbf{DeepImagine (same-anchor, $n$ pathways).} An ablation in which all $n$ pathways are anchored one the same trial. This violates the i.i.d.\ anchor assumption of Proposition~\ref{thm:multi-anchor} and controls for whether multi-anchor improvement can be attributed to majority voting alone.\\

\subsection{Ablation Study Setup}
\label{ssec:exp-ablations}

To isolate the contributions that anchoring on historical trials and stepwise factorization used in DeepImagine for training language models, we train two ablation variants of Gemma, both with and without GPT-5-generated reasoning traces.

\textbf{(A1) Direct Configuration-to-Result Prediction.}\\
Gemma is trained to map a configuration $X = (C, N, A, O, E)$ directly to its result $R$. This setting tests whether a language model can directly capture the latent structural mechanism governing clinical trial outcomes. We use $127$k configuration-result pairs to ensure data scale is not the bottleneck.

\textbf{(A2) Marginal Prediction from Historical Trial Result}\\
Gemma is trained to predict $R^{(T)}$ from a single anchor pair $\bigl(R^{(1)}, X^{(1)}\bigr)$, which is precisely the marginal of Equation~\ref{eq:marginal_equation}. For each of the $127$k target units, we identify a single similar predecessor unit via NV-Embed-v2 retrieval followed by GPT-5 verification, yielding $127$k training pairs.

\begin{figure}[htp]
    \centering
    \includegraphics[width=\linewidth]{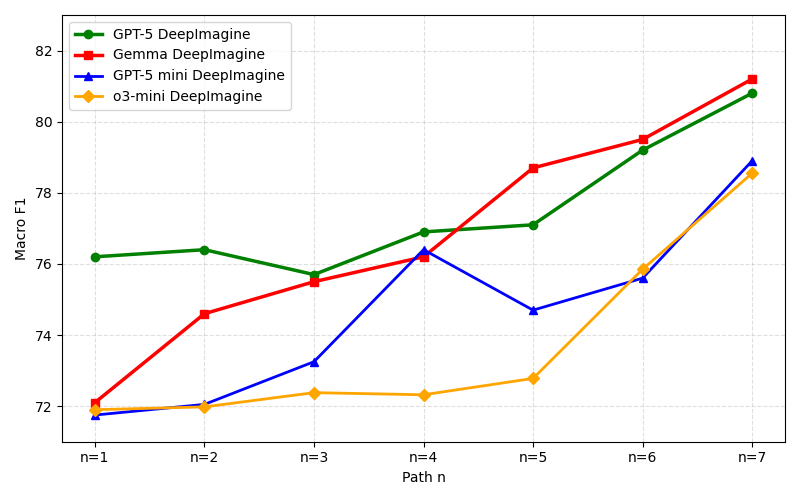}
    \caption{Macro-f1 performance when number of independent pathways increase for tested models.}
    \label{fig:scatter_plot}
\end{figure}

\subsection{Analyses}

\paragraph{DeepImagine improves all off-the-shelf LLMs over Direct and RAG baselines.}
Table~\ref{tab:main_results} shows that even with a single anchor ($n=1$), DeepImagine outperforms both Direct and RAG for all three off-the-shelf LLMs evaluated. The improvement over the RAG baseline is $+3.8$ for GPT-5, $+4.4$ for GPT-5-mini, and $+1.2$ for o3-mini. Notably, the Direct baseline is the weakest setting for GPT-5 ($69.4$) yet slightly stronger than RAG for o3-mini ($74.4$ vs $71.1$), indicating that retrieved historical trials are not always informative when supplied as raw context. DeepImagine, by contrast, consumes the same retrieved trials but processes them through stepwise local counterfactual imagination rather than direct correlation, and the resulting gains are stable across LLMs of different scales and providers.

\begin{table}[bpht]
\centering
\caption{{Refer to Section~\ref{ssec:exp-setup}} for details on Strategy. Metric is macro-f1. Results should be interpreted per row. RAG is used as baseline. One green upward arrow means nominal increase and two green upward arrows mean statistically significant increase. 95\% confidence interval is estimated by bootstrapping.}
\label{tab:main_results}
\setlength{\tabcolsep}{3pt}
\renewcommand{\arraystretch}{1.25}
\begin{tabular}{lccccc}
\toprule
\diagbox[width=7.5em, height=2.6em]{LLM}{Strategy} 
& RAG 
& Direct
& \makecell{DeepImagine\\($n=1$)} 
& \makecell{DeepImagine\\($n=7$, diff.\ anchors)} 
& \makecell{DeepImagine\\($n=7$, same anchor)} \\
\midrule
GPT-5      & 72.4 & 69.4 \nomdown & 76.2 \nomup & 80.8 \sigup & 75.8 \nomup\\
GPT-5-mini & 71.7 & 72.8 \nomup & 76.1 \nomup & 79.4 \sigup & 76.0 \nomup\\
o3-mini    & 71.1 & 74.4 \nomup & 72.3 \nomup & 78.3 \sigup& 73.3 \nomup \\
HINT \cite{fu2022hint} & - & 55.6 & - & - & - \\
Feed-Forward NN \cite{tranchevent2019deep} & - & 64.2 & - & - & - \\
\bottomrule
\end{tabular}
\end{table}

\begin{table}[bpht]
\centering
\caption{Ablation results on Summer 2025 CT Open Benchmark. See Section~\ref{ssec:exp-ablations}.}
\label{tab:ablation_results}
\setlength{\tabcolsep}{5pt}
\renewcommand{\arraystretch}{1.25}
\begin{tabular}{lccccc}
\toprule
Benchmark 
& \makecell{(A1) Direct\\w/ Reasoning} 
& \makecell{(A1) Direct\\w/o Reasoning} 
& \makecell{(A2) Marginal\\w/ Reasoning} 
& \makecell{(A2) Marginal\\w/o Reasoning} 
& \makecell{Gemma\\DeepImagine\\($n=7$)} \\
\midrule
Macro-f1 & 54.8 & 56.9 & 60.2 & 63.2 & 81.2\sigup \\
\bottomrule
\end{tabular}
\end{table}

\paragraph{Aggregating across multiple distinct anchors yields substantial additional gains.}
Moving from $n=1$ to $n=7$ different anchors improves DeepImagine by $+4.6$ for GPT-5 (to $80.8$), $+3.3$ for GPT-5-mini (to $79.4$), and $+6.0$ for o3-mini (to $78.3$). Figure~\ref{fig:scatter_plot} shows that performance grows almost monotonically with the number of independent anchors and does not seem to saturate around $n=7$, exactly the variance-reduction signature formalized by Proposition~\ref{thm:multi-anchor}: as $n$ increases, the empirical average of single-anchor predictions converges toward the true target conditional under approximate calibration.

\paragraph{Multi-anchor gains are not explained by majority voting.}
The same-anchor ablation runs $n=7$ pathways from a single historical trial, controlling for stochasticity in LLM decoding while violating the i.i.d.\ anchor assumption of Proposition~\ref{thm:multi-anchor}. Across all three LLMs, same-anchor aggregation lags different-anchor aggregation by $3$--$5$ macro-F1 points and offers only marginal improvement over $n=1$. This rules out majority voting or simple stochastic averaging as the explanation: distinct historical anchors contribute non-redundant information that cannot be recovered by sampling more pathways from the same anchor, precisely as Proposition~\ref{thm:multi-anchor} requires.

\paragraph{Stepwise factorization is the dominant source of empirical gain.}
Table~\ref{tab:ablation_results} compares DeepImagine to two alternative training schemes built on the same Gemma backbone with matched training budgets. Direct configuration-to-result prediction (A1) reaches only $54.8$/$56.9$ macro-F1, indicating that a language model cannot easily learn the joint mapping from a configuration to its result. Single-step marginal prediction from a similar anchor (A2), which targets exactly the marginal of Equation~\ref{eq:marginal_equation} but in one shot rather than via stepwise factorization, improves to $60.2$/$63.2$ but remains far below DeepImagine. With identical backbone and data, DeepImagine reaches $81.2$, an absolute gain of more than $18$ points over the best A2 setting and more than $24$ points over the best A1 setting. The contrast between A2 and DeepImagine is particularly informative: both target the same quantity, but only the stepwise factorization makes it learnable in practice. The greedy approximation in Proposition~\ref{thm:single-pathway-approx} is therefore not merely a computational convenience; it is the only practically learnable form of the marginal in this setting.

\paragraph{Trained Gemma operators match GPT-5 within DeepImagine.}
The Gemma family of local operators within DeepImagine ($81.2$) matches GPT-5 within the same framework ($80.8$) at $n=7$, despite a much smaller parameter count. This indicates that explicitly teaching a language model to perform local counterfactual imagination from natural and synthetic counterfactuals is comparable in effectiveness to relying on the innate reasoning of a much larger general-purpose LLM, while being substantially more efficient at deployment.

\section{Limitation}

Limitations include the focus on endogenous factors only (excluding exogenous shocks such as protocol amendments and non-adherence), and dependence on GPT-5 for synthesizing intermediate steps where natural counterfactuals are unavailable; improving the fidelity of these synthetic intermediates, including via verification against future trial readouts, is a natural next step.

\newpage

\clearpage
\bibliography{neurips_2026}

\newpage
\appendix
\section{Training Details}
\label{app:training-details}
We use GPT-5 \cite{openai2025gpt5}, with medium reasoning effort and low verbosity, to generate synthetic counterfactual reasoning trajectories for all four endogenous factors, enrollment count $N$, study arm $A$, outcome measure $O$, and eligibility criteria $E$, yielding 45k trajectories per factor (Section~\ref{sec:training}). For study-arm and outcome-measure perturbations, these synthetic trajectories augment 57k and 82k natural counterfactual pairs respectively; for enrollment-count and eligibility-criteria perturbations, no natural counterfactuals are available, so training relies solely on synthetic data.

We fine-tuned a separate Gemma-3-12B-it \cite{gemma3report} model for each of the four local operators $P_\theta^{\Phi}$, $\Phi \in \{N, A, O, E\}$, using the identical hyperparameter configuration listed in Table~\ref{tab:hyperparameters}. The same configuration is also used for the two ablation training schemes in Section~\ref{ssec:exp-ablations} (A1: direct configuration-to-result prediction; A2: single-step marginal prediction from a similar anchor), in both their with-reasoning and without-reasoning variants. LoRA adapters were applied to all linear layers with rank $r=32$, scaling factor $\alpha=64$, and dropout $0.05$. We trained for one epoch with a per-device batch size of 1 and gradient accumulation of 2. Optimization used AdamW with a peak learning rate of $5\times 10^{-5}$, cosine decay to a minimum of $10\%$ of the peak learning rate, $5\%$ warm-up steps, zero weight decay, and gradient clipping at $1.0$. The maximum sequence length was 8{,}192 tokens. Training was performed on a single node with 7$\times$80GB H100 GPUs using FlashAttention-4. Figure~\ref{fig:training_loss_curves} shows representative training loss curves for the arm- and outcome-perturbation operators; loss curves for the enrollment-count and eligibility-criteria operators are qualitatively similar.

\begin{table}[htbp]
  \centering
  \small
  \caption{Hyperparameters used uniformly across all Gemma-based experiments, including the four local operators $P_\theta^{\Phi}$ for $\Phi \in \{N, A, O, E\}$ and both ablation training schemes (A1, A2).}
  \label{tab:hyperparameters}
  \begin{tabular}{@{}ll@{\hskip 2.5em}ll@{}}
    \toprule
    \textbf{Hyperparameter} & \textbf{Value} & \textbf{Hyperparameter} & \textbf{Value} \\
    \midrule
    \multicolumn{2}{@{}l}{\textit{Model and fine-tuning}} & \multicolumn{2}{l}{\textit{Hardware and precision}} \\
    \quad Base model                     & Gemma-3-12B-it            & \quad Compute   & 7$\times$GPU, single-node DDP \\
    \quad Fine-tuning method             & LoRA (all linear layers)  & \quad Attention & FlashAttention-4 \\
    \quad LoRA rank / $\alpha$ / dropout & $32$\,/\,$64$\,/\,$0.05$  &                 &  \\
    \addlinespace[0.5em]
    \multicolumn{2}{@{}l}{\textit{Optimization}} & \multicolumn{2}{l}{\textit{Batching and schedule}} \\
    \quad Optimizer              & AdamW              & \quad Epochs               & $1$      \\
    \quad Maximum learning rate  & $5 \times 10^{-5}$ & \quad Per-GPU batch size   & $1$      \\
    \quad Minimum learning rate  & $5 \times 10^{-6}$ & \quad Gradient accumulation & $2$     \\
    \quad LR scheduler           & Cosine decay       & \quad Effective batch size & $14$     \\
    \quad Warmup ratio           & $5\%$              & \quad Max sequence length  & $8{,}192$ \\
    \quad Weight decay           & $0.0$              &                            &          \\
    \quad Gradient clipping      & $1.0$              &                            &          \\
    \bottomrule
  \end{tabular}
\end{table}

\begin{figure}[htbp]
  \centering
  \begin{subfigure}[t]{0.48\textwidth}
    \centering
    \includegraphics[width=\linewidth]{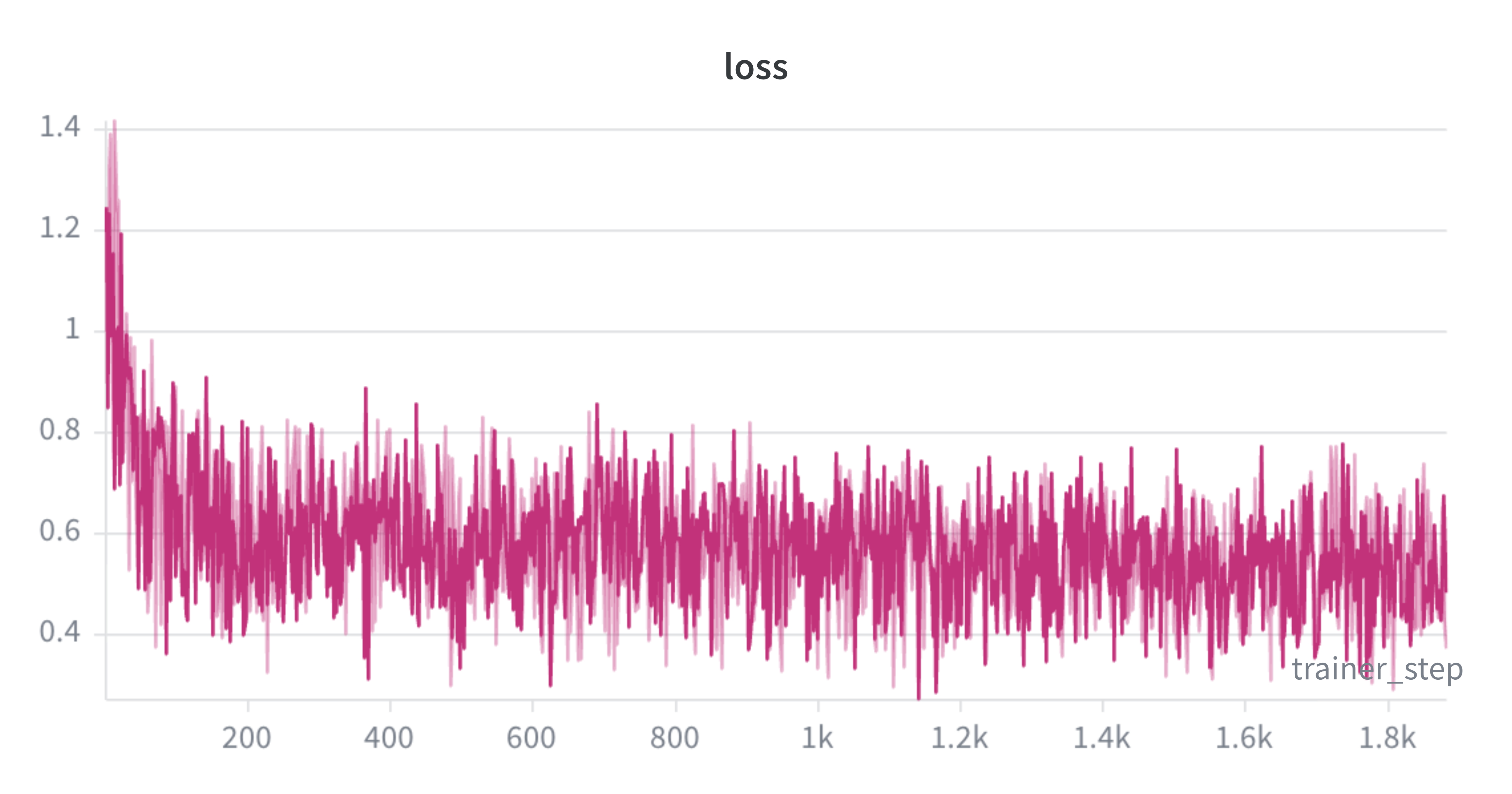}
    \caption{Training loss for $P_\theta^{A}$ (study-arm perturbation).}
    \label{fig:arm-perturbation}
  \end{subfigure}
  \hfill
  \begin{subfigure}[t]{0.48\textwidth}
    \centering
    \includegraphics[width=\linewidth]{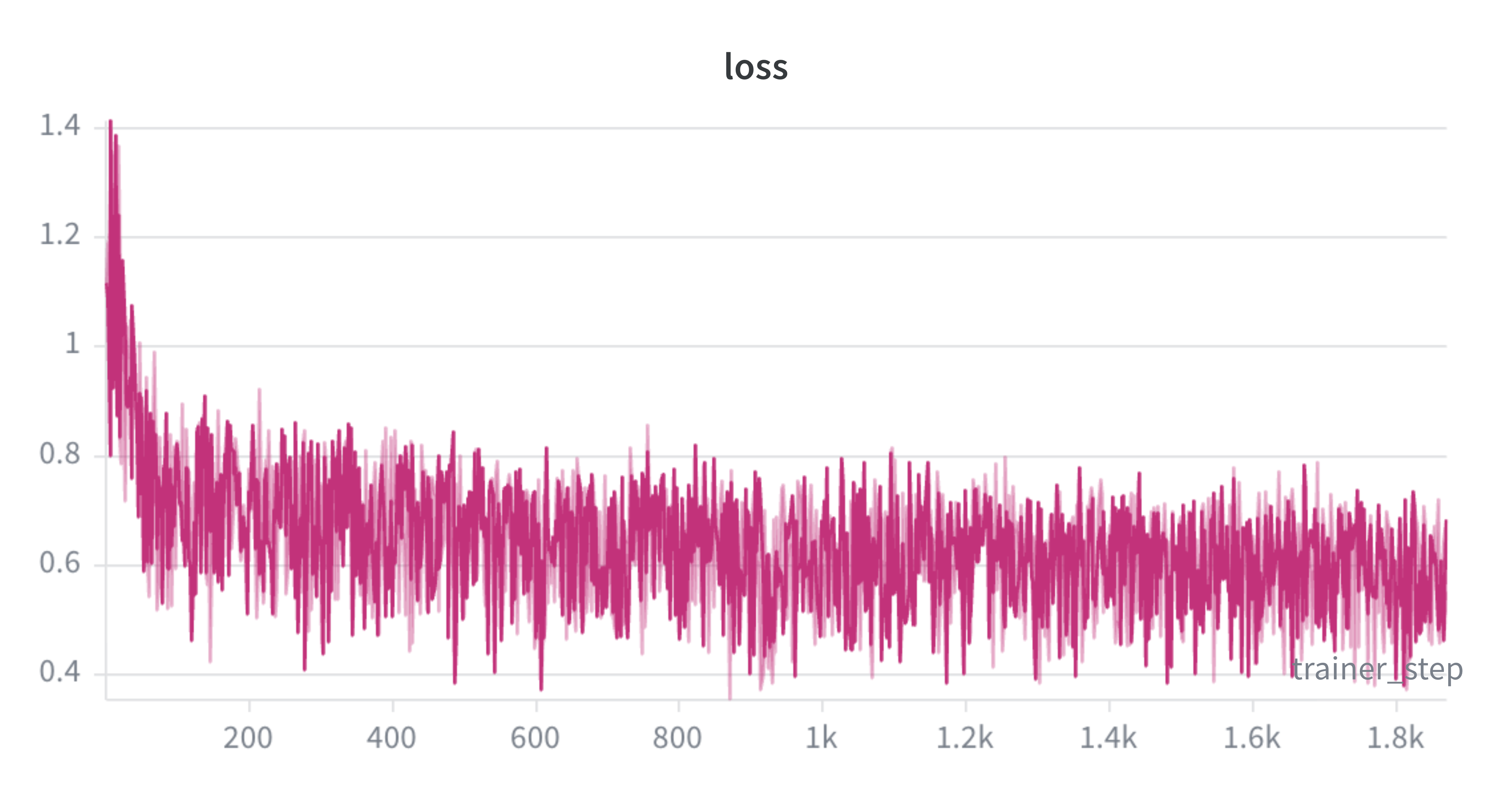}
    \caption{Training loss for $P_\theta^{O}$ (outcome-measure perturbation).}
    \label{fig:outcome-perturbation}
  \end{subfigure}
  \caption{Representative training loss curves for two of the four local operators. The enrollment-count operator $P_\theta^{N}$ and eligibility-criteria operator $P_\theta^{E}$ exhibit qualitatively similar loss trajectories.}
  \label{fig:training_loss_curves}
\end{figure}

\section{Proof of Proposition~\ref{thm:single-pathway-approx} (Greedy Approximation of the Marginal)}
\label{app:single-pathway-approx}

For convenience we restate the proposition. We write $P_\theta^{\Phi_{t-1}}(\hat{R}^{(t)} \mid \cdots) := P_\theta^{\Phi_{t-1}}(\{\hat{R}^{(t)}\} \mid \cdots)$ for the atom mass that the local operator places at the greedy mode $\hat{R}^{(t)}$. We write $P_\theta$ for the model marginal (denoted $P$ in the main text). Marginalization over intermediate trajectories is performed via integration with respect to the local probability measures.

\begin{proposition}[\textbf{Greedy Approximation of the Marginal, restated}]
Under the $\delta_t$-dominance assumption (Definition~\ref{def:dominant}) for $t = 2, \ldots, T$,
\[
\Bigl| \prod_{t=2}^{T} P_\theta^{\Phi_{t-1}}\!\bigl(\hat{R}^{(t)} \,\big|\, \hat{R}^{(t-1)},\, X^{(t-1)},\, X^{(t)}\bigr) \;-\; P_\theta\!\bigl(\hat{R}^{(T)} \,\big|\, R^{(1)},\, X^{(1)},\, \ldots,\, X^{(T)}\bigr) \Bigr| \;\leq\; \sum_{t=2}^{T} \delta_t.
\]
\end{proposition}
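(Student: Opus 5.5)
Write $G := \prod_{t=2}^{T} P_\theta^{\Phi_{t-1}}(\hat{R}^{(t)} \mid \hat{R}^{(t-1)}, X^{(t-1)}, X^{(t)})$ for the greedy-path probability and $M := P_\theta(\hat{R}^{(T)} \mid R^{(1)}, X^{(1)}, \ldots, X^{(T)})$ for the model marginal of Equation~\ref{eq:marginal_equation} evaluated at the atom $\{\hat{R}^{(T)}\}$. The plan is to show that both $G$ and $M$ lie in $[1-\sum_t\delta_t,\,1]$. The difference of two numbers in that interval is at most $\sum_t \delta_t$.

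\textbf{Step 1 (bounds on $G$).} By $\delta_t$-dominance (Definition~\ref{def:dominant}), each factor of $G$ satisfies $1-\delta_t \le P_\theta^{\Phi_{t-1}}(\hat{R}^{(t)} \mid \cdots) \le 1$. Hence $\prod_{t}(1-\delta_t) \le G \le 1$. I will use the elementary inequality $\prod_t (1-\delta_t) \ge 1-\sum_t \delta_t$, proved by induction on the number of factors using $(1-a)(1-b)\ge 1-a-b$ for $a,b\in[0,1]$. This gives $1-\sum_t\delta_t \le G \le 1$.

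\textbf{Step 2 (lower bound on $M$).} Write the marginal as an iterated integral of the Markov kernels $P_\theta^{\Phi_{t-1}}$, integrating the indicator of $\{R^{(T)}=\hat{R}^{(T)}\}$. The integrand is nonnegative, so restricting each intermediate integration to the atom $\{R^{(t)}=\hat{R}^{(t)}\}$, $t=2,\ldots,T-1$, can only decrease the integral. On that restricted set, the conditioning arguments of every kernel are exactly the greedy ones, so the restricted integral equals $G$. Therefore $M \ge G \ge 1-\sum_t\delta_t$. Since $M$ is a probability, we also have $M\le 1$.

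\textbf{Step 3 (conclude).} Both $G$ and $M$ lie in $[1-\sum_t\delta_t,\,1]$, so $|G-M|\le \sum_{t=2}^T\delta_t$. In fact Step~2 gives the sharper one-sided bound $0\le M-G\le 1-G\le \sum_t\delta_t$. Specializing to $T=5$ yields $\delta_2+\delta_3+\delta_4+\delta_5$.

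\textbf{Main obstacle.} The delicate point is Step~2. Dominance is only assumed along the greedy trajectory, so we have no control over the kernels off that path. The proof must therefore never use dominance off-path; it uses only the nonnegativity of the integrand and the fact that each kernel is a probability measure. A second point to handle carefully is measure-theoretic. The intermediate results may live in a general (e.g.\ textual or continuous) space. The restriction argument needs the greedy values to be atoms of the local measures, which is guaranteed by dominance, since it gives mass at least $1-\delta_t>0$ at a single point. The integral restricted to a product of atoms then reduces to the product of atom masses.
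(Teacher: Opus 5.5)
Your proposal is correct and follows the paper's proof essentially step for step. You lower-bound the greedy product by $\prod_t(1-\delta_t)\ge 1-\sum_t\delta_t$ via induction, show the marginal atom mass at $\hat{R}^{(T)}$ is at least the greedy-path mass by restricting the nonnegative integrand to the greedy trajectory, and conclude from $1-\sum_t\delta_t\le G\le M\le 1$ (the paper's additional total-variation corollary is not needed for the stated bound).
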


\begin{proof}
 Throughout, all conditioning is on the fixed inputs $(R^{(1)}, X^{(1)}, \ldots, X^{(T)})$, which we abbreviate as $(\cdots)$.

By the probabilistic parametrization (Definition~\ref{def:prob-parametrization}) and the chain-rule decomposition (Definition~\ref{def:stepwise-cf}), the joint probability measure over the trajectory $(R^{(2)}, \ldots, R^{(T)}) \in \mathcal{R}^{T-1}$ factorizes as
\[
P_\theta\!\bigl(dR^{(2)}, \ldots, dR^{(T)} \,\big|\, \cdots\bigr) \;=\; \prod_{t=2}^{T} P_\theta^{\Phi_{t-1}}\!\bigl(dR^{(t)} \,\big|\, R^{(t-1)}, X^{(t-1)}, X^{(t)}\bigr).
\]
Restricting both sides to the singleton $\{(\hat{R}^{(2)}, \ldots, \hat{R}^{(T)})\} \subseteq \mathcal{R}^{T-1}$ gives the joint atom mass:
\begin{equation}
\label{eq:joint-equals-product}
P_\theta\!\bigl(\hat{R}^{(2)}, \ldots, \hat{R}^{(T)} \,\big|\, \cdots\bigr) \;=\; \prod_{t=2}^{T} P_\theta^{\Phi_{t-1}}\!\bigl(\hat{R}^{(t)} \,\big|\, \hat{R}^{(t-1)}, X^{(t-1)}, X^{(t)}\bigr).
\end{equation}

By $\delta_t$-dominance, $P_\theta^{\Phi_{t-1}}(\hat{R}^{(t)} \mid \hat{R}^{(t-1)}, X^{(t-1)}, X^{(t)}) \geq 1 - \delta_t$ for every $t \in \{2, \ldots, T\}$. Multiplying these inequalities,
\begin{equation}
\label{eq:joint-lower-bound}
\prod_{t=2}^{T} P_\theta^{\Phi_{t-1}}\!\bigl(\hat{R}^{(t)} \,\big|\, \hat{R}^{(t-1)}, \cdots\bigr) \;\geq\; \prod_{t=2}^{T}(1 - \delta_t) \;\geq\; 1 - \sum_{t=2}^{T}\delta_t,
\end{equation}
where the final inequality is the Weierstrass product inequality, proved by induction on $T$. The base case $T = 2$ gives $1 - \delta_2 \geq 1 - \delta_2$, which is trivial. For the inductive step, assuming $\prod_{t=2}^{T}(1-\delta_t) \geq 1 - \sum_{t=2}^{T}\delta_t$,
\[
\prod_{t=2}^{T+1}(1-\delta_t) \;=\; (1-\delta_{T+1})\prod_{t=2}^{T}(1-\delta_t) \;\geq\; (1-\delta_{T+1})\Bigl(1 - \sum_{t=2}^{T}\delta_t\Bigr).
\]
Expanding the right-hand side,
\[
(1-\delta_{T+1})\Bigl(1 - \sum_{t=2}^{T}\delta_t\Bigr) \;=\; 1 - \sum_{t=2}^{T+1}\delta_t \,+\, \delta_{T+1}\sum_{t=2}^{T}\delta_t \;\geq\; 1 - \sum_{t=2}^{T+1}\delta_t,
\]
since $\delta_{T+1}\sum_{t=2}^{T}\delta_t \geq 0$ for $\delta_t \in [0,1]$.

The marginal atom mass of $R^{(T)}$ at the greedy value $\hat{R}^{(T)}$ is obtained by integrating the joint over all continuous intermediate trajectories on \[
\begin{aligned}
P_\theta\!\bigl(\hat{R}^{(T)} \,\big|\, \cdots\bigr)
&= \int_{\mathcal{R}^{T-2}}\!\!
P_\theta^{\Phi_{T-1}}\!\bigl(\hat{R}^{(T)} \,\big|\,
R^{(T-1)}, X^{(T-1)}, X^{(T)}\bigr) \\
&\qquad {}\times
\prod_{t=2}^{T-1}
P_\theta^{\Phi_{t-1}}\!\bigl(dR^{(t)} \,\big|\,
R^{(t-1)}, X^{(t-1)}, X^{(t)}\bigr).
\end{aligned}
\]
The greedy trajectory $(\hat{R}^{(2)}, \ldots, \hat{R}^{(T-1)}, \hat{R}^{(T)})$ contributes the joint atom mass $P_\theta(\hat{R}^{(2)}, \ldots, \hat{R}^{(T)} \mid \cdots)$ to the integral above. Since each local operator has an atom of mass $\geq 1 - \delta_t$ at $\hat{R}^{(t)}$, this contribution is non-negative, and the integral is at least this single contribution:
\[
P_\theta\!\bigl(\hat{R}^{(T)} \,\big|\, \cdots\bigr) \;\geq\; P_\theta\!\bigl(\hat{R}^{(2)}, \ldots, \hat{R}^{(T)} \,\big|\, \cdots\bigr).
\]
Combining with \eqref{eq:joint-equals-product} and \eqref{eq:joint-lower-bound},
\begin{equation}
\label{eq:marginal-lower-bound}
1 \;\geq\; P_\theta\!\bigl(\hat{R}^{(T)} \,\big|\, \cdots\bigr) \;\geq\; P_\theta\!\bigl(\hat{R}^{(2)}, \ldots, \hat{R}^{(T)} \,\big|\, \cdots\bigr) \;=\; \prod_{t=2}^{T} P_\theta^{\Phi_{t-1}}\!\bigl(\hat{R}^{(t)} \,\big|\, \hat{R}^{(t-1)}, \cdots\bigr) \;\geq\; 1 - \sum_{t=2}^{T}\delta_t.
\end{equation}
Both the stepwise product and the marginal atom mass at $\hat{R}^{(T)}$ thus lie in the interval $\bigl[1 - \sum_t \delta_t,\, 1\bigr]$, with the product no larger than the marginal. This yields the first claim:
\[
0 \;\leq\; P_\theta\!\bigl(\hat{R}^{(T)} \,\big|\, \cdots\bigr) \,-\, \prod_{t=2}^{T} P_\theta^{\Phi_{t-1}}\!\bigl(\hat{R}^{(t)} \,\big|\, \hat{R}^{(t-1)}, \cdots\bigr) \;\leq\; 1 - \Bigl(1 - \sum_{t=2}^{T}\delta_t\Bigr) \;=\; \sum_{t=2}^{T}\delta_t.
\]

For the total variation corollary, recall that for any probability measure $P$ on $(\mathcal{R}, \mathcal{B})$ and any point $r^\star \in \mathcal{R}$,
\[
\bigl\| P - \boldsymbol{\delta}_{r^\star} \bigr\|_{\mathrm{TV}} \;=\; \sup_{B \in \mathcal{B}}\, \bigl| P(B) - \boldsymbol{\delta}_{r^\star}(B) \bigr| \;=\; 1 - P(\{r^\star\}),
\]
where the supremum is attained at $B = \{r^\star\}^c$ (or equivalently $B = \{r^\star\}$). Applying this with $r^\star = \hat{R}^{(T)}$ and \eqref{eq:marginal-lower-bound},
\[
\Bigl\| P_\theta\!\bigl(\,\cdot\, \,\big|\, R^{(1)}, X^{(1)}, \ldots, X^{(T)}\bigr) \,-\, \boldsymbol{\delta}_{\hat{R}^{(T)}} \Bigr\|_{\mathrm{TV}} \;=\; 1 - P_\theta\!\bigl(\hat{R}^{(T)} \,\big|\, \cdots\bigr) \;\leq\; \sum_{t=2}^{T}\delta_t.
\]
This establishes the corollary and completes the proof.
\end{proof}

\paragraph{Remark (Linear-in-$T$ scaling).}
The bound grows linearly in the pathway length $T$, not exponentially. This is the central reason the stepwise decomposition is useful: even modest per-step dominance gaps $\delta_t$ compose into a manageable total error. In our setting $T = 5$, the bound reduces to $\delta_2 + \delta_3 + \delta_4 + \delta_5$, recovering the bound stated in the main text statement of Proposition~\ref{thm:single-pathway-approx}.

\section{Proof of Proposition~\ref{thm:multi-anchor} (Convergence of Aggregating Counterfactual Pathways)}
\label{app:multi-anchor}

For convenience we restate the proposition. Throughout this appendix, the result space $\mathcal{R}$ is continuous (e.g., $\mathbb{R}$ or $\mathbb{R}^d$) equipped with a $\sigma$-algebra $\mathcal{B}$, and the configuration space $\mathcal{X}$ is equipped with a $\sigma$-algebra $\mathcal{A}$. All single-pathway predictions $P_\theta(\,\cdot\, \mid R^{(1)}, X^{(1)}, X^{(T)})$ are interpreted as probability measures on $(\mathcal{R}, \mathcal{B})$.

Let $\{(X_i^{(1)}, R_i^{(1)})\}_{i \geq 1}$ be an i.i.d.\ sequence of historical anchor pairs drawn from the true data-generating law $p_\star$: $X_i^{(1)} \sim \mu$ for some marginal distribution $\mu$ on $\mathcal{X}$, and conditionally on $X_i^{(1)}$, $R_i^{(1)} \sim P_{p_\star}(\,\cdot\, \mid X_i^{(1)})$. For each fixed target configuration $X^{(T)}$, define the \textit{prior predictive distribution}, the anchor-marginal of DeepImagine's single-pathway prediction:
\[
M_\theta\!\bigl(B \,\big|\, X^{(T)}\bigr) \;:=\; \mathbb{E}_{(X^{(1)}, R^{(1)}) \sim p_\star}\!\Bigl[ P_\theta\!\bigl(B \,\big|\, R^{(1)}, X^{(1)}, X^{(T)}\bigr) \Bigr], \qquad B \in \mathcal{B}.
\]

\begin{definition}[\textbf{Calibration}]
\label{def:calibration}
The model $P_\theta$ is \textit{calibrated} at the target configuration $X^{(T)}$ if its prior predictive coincides with the true target conditional as probability measures: for every measurable event $B \in \mathcal{B}$,
\[
M_\theta\!\bigl(B \,\big|\, X^{(T)}\bigr) \;=\; P_{p_\star}\!\bigl(B \,\big|\, X^{(T)}\bigr).
\]
\end{definition}

\begin{proposition}[\textbf{Convergence of Aggregating Counterfactual Pathways, restated}]
Suppose $P_\theta$ is calibrated at the target configuration $X^{(T)}$ in the sense of Definition~\ref{def:calibration}. Then, for every measurable event $B \in \mathcal{B}$,
\[
\frac{1}{n} \sum_{i=1}^{n} P_\theta\!\bigl(B \,\big|\, R_i^{(1)}, X_i^{(1)}, X^{(T)}\bigr) \;\longrightarrow\; P_{p_\star}\!\bigl(B \,\big|\, X^{(T)}\bigr) \qquad \text{a.s.\ as } n \to \infty.
\]
\end{proposition}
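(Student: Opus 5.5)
For each fixed event $B \in \mathcal{B}$ we would reduce the claim directly to Kolmogorov's strong law of large numbers. Fix $X^{(T)}$ and $B$, and define the scalar random variables
\[
Z_i \;:=\; P_\theta\!\bigl(B \,\big|\, R_i^{(1)}, X_i^{(1)}, X^{(T)}\bigr), \qquad i \geq 1 .
\]
The plan has three steps: show the $Z_i$ are i.i.d.; show they are integrable; and identify their common mean as $P_{p_\star}(B \mid X^{(T)})$ via calibration.

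\textbf{Step 1 (measurability and i.i.d.).} The map $(x,r) \mapsto P_\theta(B \mid r, x, X^{(T)})$ is the composition of the stepwise kernels in Equation~\ref{eq:marginal_equation}. We take the local operators $P_\theta^{\Phi}$ to be Markov kernels, i.e.\ jointly measurable in their conditioning arguments. Then the map is a $(\mathcal{A}\otimes\mathcal{B})$-measurable function, since integrals of kernels against kernels preserve measurability (Tonelli). Each $Z_i = g(X_i^{(1)}, R_i^{(1)})$ for the same deterministic measurable $g$, and the anchors are i.i.d.\ under $p_\star$. Hence $\{Z_i\}$ is i.i.d. This step is the only place requiring care: the claim is implicit in treating the single-pathway prediction as a well-defined random measure, and I would state it as a standing regularity assumption on the $P_\theta^\Phi$ rather than prove it.

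\textbf{Step 2 (integrability and mean).} Since $Z_i$ is a probability, $0 \leq Z_i \leq 1$, so $\mathbb{E}|Z_1| \leq 1 < \infty$. By definition of the prior predictive, $\mathbb{E}[Z_1] = M_\theta(B \mid X^{(T)})$. Calibration (Definition~\ref{def:calibration}) then gives $\mathbb{E}[Z_1] = P_{p_\star}(B \mid X^{(T)})$.

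\textbf{Step 3 (SLLN).} By the strong law of large numbers, $\frac1n\sum_{i=1}^n Z_i \to \mathbb{E}[Z_1]$ almost surely, which is exactly the claim for the fixed $B$. If one wants the null set uniform in $B$ (convergence of the averaged measures as measures), I would follow a standard route. Take a countable $\pi$-system generating $\mathcal{B}$ (e.g.\ rational half-open boxes when $\mathcal{R}=\mathbb{R}^d$) and intersect the countably many full-measure events. This yields almost sure weak convergence, in the spirit of Varadarajan's theorem. The statement as restated only requires the per-event version, however.

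For the relaxed variant mentioned in the main text, the same argument applies with $|\mathbb{E}Z_1 - P_{p_\star}(B\mid X^{(T)})| \leq \epsilon$. This gives $\limsup_n |\frac1n\sum_i Z_i - P_{p_\star}(B \mid X^{(T)})| \leq \epsilon$ almost surely.
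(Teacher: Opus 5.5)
Your proof is correct and follows the paper's argument essentially step for step. You show that the $Z_i$ are i.i.d.\ and lie in $[0,1]$, being a fixed measurable function of the i.i.d.\ anchors; your explicit kernel-measurability assumption just makes precise what the paper takes for granted. Kolmogorov's strong law then gives convergence to $\mathbb{E}[Z_1]=M_\theta(B\mid X^{(T)})$, and calibration identifies this with $P_{p_\star}(B\mid X^{(T)})$.

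One caveat concerns your optional uniform-in-$B$ remark. Convergence on an arbitrary countable generating $\pi$-system does not in general imply weak convergence, so you need a countable convergence-determining class instead. Your rational half-open boxes in $\mathbb{R}^d$ are such a class, since every open set is a countable union of them.
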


\begin{proof}
Fix a measurable event $B \in \mathcal{B}$ and a target configuration $X^{(T)}$. Define
\[
Y_i \;:=\; P_\theta\!\bigl(B \,\big|\, R_i^{(1)}, X_i^{(1)}, X^{(T)}\bigr), \qquad i = 1, 2, \ldots.
\]
Each $Y_i$ is a probability, so $0 \leq Y_i \leq 1$. Because $Y_i$ is a fixed measurable function of the random pair $(X_i^{(1)}, R_i^{(1)})$ and the anchor pairs are i.i.d., the random variables $Y_1, Y_2, \ldots$ are also i.i.d.\ and uniformly bounded, hence integrable.

By Kolmogorov's strong law of large numbers,
\[
\frac{1}{n} \sum_{i=1}^{n} Y_i \;\longrightarrow\; \mathbb{E}[Y_1] \qquad \text{a.s.\ as } n \to \infty.
\]

By definition of the prior predictive,
\[
\mathbb{E}[Y_1] \;=\; \mathbb{E}_{(X_1^{(1)}, R_1^{(1)}) \sim p_\star}\!\Bigl[ P_\theta\!\bigl(B \,\big|\, R_1^{(1)}, X_1^{(1)}, X^{(T)}\bigr) \Bigr] \;=\; M_\theta(B \mid X^{(T)}).
\]

By the calibration assumption (Definition~\ref{def:calibration}),
\[
M_\theta(B \mid X^{(T)}) \;=\; P_{p_\star}(B \mid X^{(T)}).
\]

\[
\frac{1}{n} \sum_{i=1}^{n} P_\theta\!\bigl(B \,\big|\, R_i^{(1)}, X_i^{(1)}, X^{(T)}\bigr) \;\longrightarrow\; P_{p_\star}\!\bigl(B \,\big|\, X^{(T)}\bigr) \qquad \text{a.s.}
\]
Since $B \in \mathcal{B}$ was arbitrary, this completes the proof.
\end{proof}

\section{Relaxed Convergence under Approximate Calibration}
\label{app:multi-anchor-relaxed}

We state and prove a relaxed version of Proposition~\ref{thm:multi-anchor} that allows the calibration assumption to hold only approximately. This relaxation is important in practice: perfect calibration is rarely achievable in finite-sample training, but the variance-reduction benefit of multi-anchor aggregation should persist regardless. We adopt the same setup and notation as Appendix~\ref{app:multi-anchor}, including the prior predictive
\[
M_\theta\!\bigl(\,\cdot\, \,\big|\, X^{(T)}\bigr) \;:=\; \mathbb{E}_{(X^{(1)}, R^{(1)}) \sim p_\star}\!\Bigl[ P_\theta\!\bigl(\,\cdot\, \,\big|\, R^{(1)}, X^{(1)}, X^{(T)}\bigr) \Bigr],
\]
and the aggregated prediction $\bar{P}_n(\,\cdot\, \mid X^{(T)}) := \tfrac{1}{n}\sum_{i=1}^n P_\theta(\,\cdot\, \mid R_i^{(1)}, X_i^{(1)}, X^{(T)})$.

We assume that, for $p_\star$-almost every anchor pair $(X^{(1)}, R^{(1)})$, the predictive measure $P_\theta(\,\cdot\, \mid R^{(1)}, X^{(1)}, X^{(T)})$ admits a density $p_\theta(r \mid R^{(1)}, X^{(1)}, X^{(T)})$ with respect to a common reference measure $\nu$ on $(\mathcal{R}, \mathcal{B})$ (e.g., Lebesgue measure on $\mathbb{R}^d$). Under this assumption, the prior predictive admits density $m_\theta(r \mid X^{(T)}) := \mathbb{E}[p_\theta(r \mid R^{(1)}, X^{(1)}, X^{(T)})]$, and the aggregated predictive admits density
\[
\bar{p}_n(r \mid X^{(T)}) \;=\; \frac{1}{n} \sum_{i=1}^n p_\theta\!\bigl(r \,\big|\, R_i^{(1)}, X_i^{(1)}, X^{(T)}\bigr).
\]

\begin{definition}[\textbf{$\epsilon$-Approximate Calibration}]
\label{def:approx-calibration}
The model $P_\theta$ is \textit{$\epsilon$-approximately calibrated} at the target configuration $X^{(T)}$ if
\[
\bigl\| M_\theta\!\bigl(\,\cdot\, \,\big|\, X^{(T)}\bigr) \;-\; P_{p_\star}\!\bigl(\,\cdot\, \,\big|\, X^{(T)}\bigr) \bigr\|_{\mathrm{TV}} \;\leq\; \epsilon,
\]
for some $\epsilon \geq 0$. Strict calibration (Definition~\ref{def:calibration}) corresponds to $\epsilon = 0$.
\end{definition}

\begin{proposition}[\textbf{Relaxed Convergence of Aggregating Counterfactual Pathways}]
\label{thm:multi-anchor-relaxed}
Suppose $P_\theta$ is $\epsilon$-approximately calibrated at $X^{(T)}$ (Definition~\ref{def:approx-calibration}), and the predictive measures admit densities w.r.t.\ a common reference measure $\nu$. Then
\[
\limsup_{n \to \infty} \;\bigl\| \bar{P}_n\!\bigl(\,\cdot\, \,\big|\, X^{(T)}\bigr) \;-\; P_{p_\star}\!\bigl(\,\cdot\, \,\big|\, X^{(T)}\bigr) \bigr\|_{\mathrm{TV}} \;\leq\; \epsilon \qquad \text{a.s.}
\]
In particular, setting $\epsilon = 0$ recovers Proposition~\ref{thm:multi-anchor} with TV convergence.
\end{proposition}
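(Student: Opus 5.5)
The plan is to split via the triangle inequality,
\[
\bigl\|\bar{P}_n - P_{p_\star}\bigr\|_{\mathrm{TV}} \;\leq\; \bigl\|\bar{P}_n - M_\theta\bigr\|_{\mathrm{TV}} + \bigl\|M_\theta - P_{p_\star}\bigr\|_{\mathrm{TV}},
\]
all measures taken at the fixed target $X^{(T)}$. The second term is at most $\epsilon$ by Definition~\ref{def:approx-calibration}, and it does not depend on $n$. So the whole statement reduces to showing $\|\bar{P}_n - M_\theta\|_{\mathrm{TV}} \to 0$ almost surely. Proposition~\ref{thm:multi-anchor} only gives this setwise, for each fixed event $B$. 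Upgrading it to a uniform statement over all of $\mathcal{B}$ is the real content of the proof, and it is where the density assumption is needed.

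I would rewrite total variation through the densities,
\[
\|\bar{P}_n - M_\theta\|_{\mathrm{TV}} = \tfrac12\int_{\mathcal{R}} |\bar{p}_n(r) - m_\theta(r)|\,\nu(dr).
\]
The problem then becomes an $L^1(\nu)$ strong law of large numbers for the i.i.d.\ random functions $Z_i := p_\theta(\cdot \mid R_i^{(1)}, X_i^{(1)}, X^{(T)})$. These are $L^1(\nu)$-valued random elements with $\|Z_i\|_{L^1} = 1$, and their Bochner mean is $m_\theta$; the identification of that mean uses Fubini. A nuisance point to check first is that $(r,\omega) \mapsto p_\theta(r \mid \cdot)$ is jointly measurable, so that Fubini applies. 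I would simply assume this as part of the density hypothesis.

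The main obstacle is this infinite-dimensional strong law. The first route is to invoke the Mourier strong law of large numbers in the separable Banach space $L^1(\nu)$. Separability holds when $\mathcal{B}$ is countably generated, as for $\mathbb{R}^d$. Integrability is immediate because $\|Z_i\| = 1$, and Mourier's law then gives $\|\bar{p}_n - m_\theta\|_{L^1} \to 0$ almost surely.

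If I want an elementary proof instead, I would use a truncation and approximation argument. Fix $\eta > 0$ and choose a simple function $g = \sum_{k=1}^K c_k \mathbf{1}_{B_k}$, with the $B_k$ taken from a countable generating algebra, such that $\mathbb{E}\|Z_1 - g_{\omega}\|$ is small. More carefully, approximate the random element $Z_1$ in $L^1(\Omega; L^1(\nu))$ by a finitely valued measurable map $W_1 = \phi(X_1^{(1)}, R_1^{(1)})$ with $\mathbb{E}\|Z_1 - W_1\|_{L^1} < \eta$. The scalar strong law applied to $\|Z_i - W_i\|$ gives $\limsup_n \frac1n\sum_i \|Z_i - W_i\| \leq \eta$ almost surely. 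Since $W_i$ takes finitely many values $w_1, \dots, w_K$, we have $\frac1n\sum_i W_i = \sum_k \hat\pi_k w_k$, where $\hat\pi_k$ are empirical frequencies. By the scalar strong law these frequencies converge almost surely to $\pi_k$, so $\frac1n\sum_i W_i \to \mathbb{E}W_1$ in $L^1(\nu)$. The remaining term satisfies $\|\mathbb{E}W_1 - m_\theta\| \leq \eta$. Combining the three pieces gives $\limsup_n \|\bar{p}_n - m_\theta\|_{L^1} \leq 2\eta$ almost surely. Intersecting over $\eta = 1/j$ leaves a countable union of null sets, so the limit is zero almost surely.

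Finally, returning to the triangle inequality yields $\limsup_n \|\bar{P}_n - P_{p_\star}\|_{\mathrm{TV}} \leq 0 + \epsilon$ almost surely. Setting $\epsilon = 0$ recovers Proposition~\ref{thm:multi-anchor}, now in the stronger, uniform-in-$B$ total-variation sense.
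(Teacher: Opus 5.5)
Your proof is correct. It shares the paper's outer structure: the triangle inequality through $M_\theta$, with the calibration term bounded by $\epsilon$, so everything reduces to showing $\|\bar{P}_n - M_\theta\|_{\mathrm{TV}} \to 0$ a.s. Where you differ is in how you obtain that key $L^1(\nu)$ convergence.

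The paper never leaves the scalar setting. It applies Kolmogorov's strong law pointwise, at each $r$ with $m_\theta(r)<\infty$. It then uses Fubini on the jointly measurable exceptional set in $\Omega\times\mathcal{R}$ to get a single probability-one event on which $\bar{p}_n \to m_\theta$ holds $\nu$-a.e. Finally, it invokes Scheff\'e's lemma to upgrade a.e.\ convergence of probability densities to $L^1$ convergence.

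You instead treat the $Z_i$ as i.i.d.\ random elements of the Banach space $L^1(\nu)$ and apply Mourier's strong law. Your elementary variant, approximating by finitely valued measurable functions of the anchor, is just the standard proof of that law.

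The paper's route uses the special structure of the problem. The $Z_i$ are nonnegative with unit mass, which is exactly what Scheff\'e needs, and the argument requires only joint measurability and $\sigma$-finiteness of $\nu$. It needs no separability or strong-measurability hypothesis.

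Your route does need the map $(x,r_1)\mapsto p_\theta(\cdot\mid r_1,x,X^{(T)})$ to be strongly measurable into $L^1(\nu)$. This is what guarantees the finitely valued approximant exists, and it is why you make the separability remark, which is harmless for Borel $\mathbb{R}^d$. In exchange, your route is more general: it never uses positivity or normalization, so it applies verbatim to arbitrary Bochner-integrable $L^1(\nu)$-valued predictions. It also gives norm convergence directly rather than through pointwise convergence.

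Both arguments rely on the same joint-measurability assumption. The paper uses it tacitly when it calls $E^c$ jointly measurable; you state it explicitly.
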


\begin{proof}
For each fixed $r \in \mathcal{R}$ at which $m_\theta(r \mid X^{(T)}) < \infty$, the random variables $\bigl\{p_\theta(r \mid R_i^{(1)}, X_i^{(1)}, X^{(T)})\bigr\}_{i \geq 1}$ are i.i.d., non-negative, with mean
\[
\mathbb{E}\bigl[p_\theta(r \mid R_1^{(1)}, X_1^{(1)}, X^{(T)})\bigr] \;=\; m_\theta(r \mid X^{(T)}) \;<\; \infty.
\]
By Kolmogorov's strong law of large numbers,
\[
\bar{p}_n(r \mid X^{(T)}) \;\longrightarrow\; m_\theta(r \mid X^{(T)}) \qquad \text{a.s.}
\]
Since $m_\theta(\,\cdot\, \mid X^{(T)})$ is a probability density, it is finite $\nu$-a.e., so this convergence holds a.s.\ for $\nu$-a.e.\ $r$.

Define the joint complement set
\[
E^c \;:=\; \bigl\{(\omega, r) \in \Omega \times \mathcal{R} \,:\, \bar{p}_n(r \mid X^{(T)})(\omega) \not\to m_\theta(r \mid X^{(T)}) \text{ as } n \to \infty\bigr\},
\]
which is jointly measurable. For $\nu$-a.e.\ fixed $r$, $\mathbb{P}\bigl(\{\omega : (\omega, r) \in E^c\}\bigr) = 0$. By Fubini's theorem,
\[
0 \;=\; \int_{\mathcal{R}} \mathbb{P}\bigl(\{\omega : (\omega, r) \in E^c\}\bigr) \, \nu(dr) \;=\; \int_{\Omega} \nu\bigl(\{r : (\omega, r) \in E^c\}\bigr) \, d\mathbb{P}(\omega).
\]
Since the integrand on the right is non-negative, $\nu\bigl(\{r : (\omega, r) \in E^c\}\bigr) = 0$ for $\mathbb{P}$-a.e.\ $\omega$. That is, there exists a probability-one event $\Omega_\infty$ on which the convergence $\bar{p}_n(r \mid X^{(T)}) \to m_\theta(r \mid X^{(T)})$ holds simultaneously for $\nu$-a.e.\ $r$.

On the event $\Omega_\infty$, we have:
(i) $\bar{p}_n(\,\cdot\, \mid X^{(T)})$ and $m_\theta(\,\cdot\, \mid X^{(T)})$ are non-negative probability densities w.r.t.\ $\nu$, both integrating to one; and
(ii) $\bar{p}_n(\,\cdot\, \mid X^{(T)}) \to m_\theta(\,\cdot\, \mid X^{(T)})$ pointwise $\nu$-a.e.
Scheff\'e's lemma asserts that pointwise a.e.\ convergence of probability densities implies $L^1$ convergence:
\[
\int_{\mathcal{R}} \bigl| \bar{p}_n(r \mid X^{(T)}) - m_\theta(r \mid X^{(T)}) \bigr| \, \nu(dr) \;\longrightarrow\; 0 \qquad \text{on } \Omega_\infty.
\]
Since the $L^1$ distance between densities equals twice the total variation distance between the corresponding probability measures,
\[
\bigl\| \bar{P}_n\!\bigl(\,\cdot\, \,\big|\, X^{(T)}\bigr) \;-\; M_\theta\!\bigl(\,\cdot\, \,\big|\, X^{(T)}\bigr) \bigr\|_{\mathrm{TV}} \;\longrightarrow\; 0 \qquad \text{a.s.}
\]

By the triangle inequality for total variation,
\[
\bigl\| \bar{P}_n - P_{p_\star} \bigr\|_{\mathrm{TV}} \;\leq\; \underbrace{\bigl\| \bar{P}_n - M_\theta \bigr\|_{\mathrm{TV}}}_{\to\, 0 \text{ a.s.\ }} \;+\; \underbrace{\bigl\| M_\theta - P_{p_\star} \bigr\|_{\mathrm{TV}}}_{\leq\, \epsilon \text{ by Def.~\ref{def:approx-calibration}}},
\]
where all measures are conditioned on $X^{(T)}$. Taking $\limsup$ as $n \to \infty$,
\[
\limsup_{n \to \infty} \bigl\| \bar{P}_n(\,\cdot\, \mid X^{(T)}) - P_{p_\star}(\,\cdot\, \mid X^{(T)}) \bigr\|_{\mathrm{TV}} \;\leq\; 0 + \epsilon \;=\; \epsilon \qquad \text{a.s.}
\]
This completes the proof.
\end{proof}

\section{Traditional Machine Learning Methods Training Details}

\paragraph{Feed-forward neural network (FFNN).}
We train a feed-forward neural network following the setup of Tranchevent et al.~\citep{tranchevent2019deep}. Numerical covariates are used directly as model inputs, with missing values replaced by the median value of the corresponding feature. Categorical covariates are first one-hot encoded, and missing categorical values are imputed using the most frequent category. Textual covariates are embedded using NV-Embed-v2~\citep{lee2025nvembed}; the resulting embeddings are reduced to 32 dimensions using PCA. For trials with missing textual fields, we use a zero-vector embedding.

The FFNN uses two hidden layers with sizes 512 and 128, ReLU activations, an $\ell_2$ regularization coefficient of $10^{-4}$, and an initial learning rate of $10^{-3}$. Training is run for at most 200 iterations.

\begin{table}[H]
  \centering
  \small
  \caption{Hyperparameters for the feed-forward neural network baseline.}
  \label{tab:ffnn-hyperparameters}
  \begin{tabular}{@{}ll@{}}
    \toprule
    \textbf{Hyperparameter} & \textbf{Value} \\
    \midrule
    Hidden layer sizes & $(512, 128)$ \\
    Activation & ReLU \\
    $\ell_2$ regularization coefficient $\alpha$ & $10^{-4}$ \\
    Initial learning rate & $10^{-3}$ \\
    Maximum iterations & $200$ \\
    \bottomrule
  \end{tabular}
\end{table}

\paragraph{Hierarchical interaction network for clinical-trial outcome prediction (HINT).}
We also adapt the hierarchical interaction network for clinical-trial outcome prediction, HINT~\citep{fu2022hint}. Following the original framework, HINT encodes drug molecules from SMILES strings, target diseases from ICD codes, and trial eligibility criteria. To make the model compatible with our task formulation, we extend the interaction graph with an additional embedding component that represents the outcome measure and the study arms being compared. All hyperparameters are set to match the configuration released with the original HINT implementation. We select the training epoch with the best validation performance and use the corresponding checkpoint to evaluate on our test set.

\end{document}